\documentclass{article} 
\usepackage{iclr2020_conference,times}

\pagestyle{plain} 

\usepackage{amsmath,amsfonts,bm}









\def\eqref#1{equation~\ref{#1}}









\def\1{\bm{1}}










\DeclareMathAlphabet{\mathsfit}{\encodingdefault}{\sfdefault}{m}{sl}
\SetMathAlphabet{\mathsfit}{bold}{\encodingdefault}{\sfdefault}{bx}{n}













\usepackage{hyperref}
\usepackage{url}

\usepackage{graphicx}
\usepackage{graphics}

\usepackage{amsthm}
\usepackage{amssymb}
\usepackage{amsmath}
\usepackage{epsfig, graphics}
\usepackage{latexsym}
\usepackage{bm}
\usepackage[parfill]{parskip}
\usepackage{subfigure}
\usepackage{xcolor}
\usepackage[ruled, linesnumbered]{algorithm2e}
\usepackage[utf8]{inputenc}
\usepackage[english]{babel}
\usepackage{rotating,booktabs,multirow}
\usepackage{tablefootnote}

\newtheorem{theorem}{Theorem}

\newtheorem{lemma}[theorem]{Lemma}
\theoremstyle{definition}
\newtheorem{definition}{Definition}[section]

\newcommand{\method}{PFT\xspace}

\newcommand*{\QEDB}{\hfill\ensuremath{\Box}}%
\newcommand*{\naive}{na\"ive }

\title{Fast Partial Fourier Transform}


\author{Yong-chan Park, Jun-Gi Jang, U Kang  \\
Seoul National University\\
\texttt{\{wjdakf3948,elnino4,ukang\}@snu.ac.kr}
}
\iclrfinalcopy

%

\begin{document}

\maketitle

\begin{abstract}
    
Given a time series vector,
how can we efficiently compute a specified part of Fourier coefficients?
Fast Fourier transform (FFT) is a widely used algorithm that computes the discrete
Fourier transform in many machine learning applications. 
Despite its pervasive use, all known FFT algorithms do not provide a fine-tuning
option for the user to specify one’s demand, that is, the output size
(the number of Fourier coefficients to be computed) is algorithmically determined by the input size.
This matters because not every application using FFT requires the whole spectrum
of the frequency domain, resulting in an inefficiency due to extra computation.

In this paper, we propose a fast Partial Fourier Transform (PFT),
a careful modification of the Cooley-Tukey algorithm that enables
one to specify an arbitrary consecutive range where the coefficients should be computed.
We derive the asymptotic time complexity of PFT
with respect to input and output sizes, as well as its numerical accuracy.
Experimental results show that our algorithm outperforms the state-of-the-art FFT algorithms,
with an order of magnitude of speedup for sufficiently small output sizes without sacrificing accuracy.

\end{abstract}

\section{Introduction}
    \label{sec:intro}
    
How can we efficiently compute a specified part of Fourier coefficients for a given time series vector?
Discrete Fourier transform (DFT) is a crucial task in several application areas,
including anomaly detection (\cite{HouZ07,RasheedPAR09,RenXWYHKXYTZ19}), data center monitoring (\cite{MueenNL10}), and image processing (\cite{ShiYY17}).
Notably, in many such applications, it is well known that the DFT results in strong ``energy-compaction'' or
``sparsity'' in the frequency domain.
That is, the Fourier coefficients of data are mostly small or equal to zero,
having a much smaller support compared to the input size.
Moreover, the support can often be specified in practice (e.g., a few low-frequency coefficients around the origin).
These observations arouse a great interest in an efficient algorithm
capable of computing only a specified part of Fourier coefficients.

Fast Fourier transform (FFT) is an algorithm that rapidly computes the DFT of a vector,
which reduces the arithmetic complexity from \naive $O(N^2)$ to $O(N \log N)$, where $N$ is data size.
Nevertheless, all known FFT algorithms do not provide the aforementioned fine-tuning option for the user,
i.e., the output size (the number of Fourier coefficients to be computed) is algorithmically determined by the input size.
Such a lack of flexibility is often followed by just discarding many unused coefficients,
not to mention the inefficiency due to the extra computation.

In this paper, we propose a fast Partial Fourier Transform (\method),
an efficient algorithm for computing a part of Fourier coefficients.
Specifically, we consider the following problem: \textit{
	given a complex-valued vector $\bm{a}$ of size $N$,
	a non-negative integer $M$, and an integer $\mu$,
	estimate the Fourier coefficients of $\bm{a}$ for the interval $[\mu-M,\mu+M]$.}
The resulting algorithm is of remarkably simple structure, composed of
several ``smaller'' FFTs combined with linear pre- and post-processing steps;
consequently, we achieve $O(N+M \log M)$ complexity of \method.
When $M\ll N$, this is a significant improvement compared to the conventional FFT
which never benefits from the information of $M$, resulting in $O(N \log N)$ complexity, consistently.

To the best of our knowledge, \method is the first DFT algorithm that enables one to control the output interval,
providing great versatility and efficiency on the computation.
There have been studies for estimating the top-$k$ (the $k$ largest in magnitude) Fourier coefficients of a given vector
(\cite{SFFT,AAFFT}), yet none of them grants a ``freedom'' of specifying the output interval.
Furthermore, \method does not require the input size to be a power of 2, unlike many other variants of FFT.
This is because the idea of \method derives from a modification of the Cooley-Tukey algorithm \citep{cooley},
which also makes it straightforward to extend the idea to a higher dimensionality
(indeed, we present 2-dimensional \method in Appendix \ref{2d_PFT}).

Through experiments, we show that \method outperforms the state-of-the-art FFT libraries,
FFTW by \cite{FFTW} and Intel Math Kernel Library (MKL),
with an order of magnitude of speedup for sufficiently small output sizes without sacrificing accuracy.

\section{Related Work}
    \label{sec:related}
    We describe related works on Fast Fourier transform (FFT), as well as its applications.

\textbf{Fast Fourier Transform.}
Cooley and Tukey proposed by far the most commonly used FFT algorithm \citep{cooley},
in which a DFT is recursively broken down into several smaller DFTs
provided that the original data is of composite size.
\cite{johnson2006modified} reduced the arithmetic complexity of FFT
to the state-of-the-art $\sim \frac{34}{9}N \log N$, where $N$ is data size.
Meanwhile, a great interest in an algorithm that efficiently computes only a small number of Fourier coefficients
has grown because of the frequently observed energy-compaction property.
A few techniques have been proposed by \cite{SFFT} and \cite{AAFFT},
which estimate the top-$k$ Fourier coefficients of a given vector.
However, all known FFT algorithms lack the ability to efficiently compute only a specified part of Fourier coefficients.

\textbf{Applications of FFT.}
Fast Fourier transform has been widely used for anomaly detection (\cite{HouZ07,RasheedPAR09,RenXWYHKXYTZ19}).
\cite{HouZ07} and~\cite{RenXWYHKXYTZ19} detect anomaly points of a given data by extracting a compact representation with FFT.
\cite{RasheedPAR09} use FFT to detect local spatial outliers which have similar patterns within a region but different patterns from the outside.
Several works (\cite{Pagh13,PhamP13,MalikB18}) exploit FFT for efficient operations.
\cite{Pagh13} leverages FFT to efficiently compute a polynomial kernel used with support vector machines (SVMs).
\cite{MalikB18} propose an efficient tucker decomposition method using FFT.
In addition, FFT has been used for fast training of convolutional neural networks (\cite{MathieuHL13,rippel2015spectral}) and an efficient recommendation model on a heterogeneous graph (\cite{abs-2007-00216}).

\section{Proposed Method}
    \label{sec:proposed}
    \subsection{Overview} \label{Sec_ov}
We propose \method, an efficient algorithm for computing a specified part of Fourier coefficients.
The main challenges and our approaches are as follows:

\begin{enumerate}
	\item \textbf{How can we extract essential information for a specified output?}
	Considering the fact that only a specified part of Fourier coefficients should be computed,
	we need to find an algorithm requiring fewer operations than the direct use of conventional FFT.
	This is achievable by carefully modifying the Cooley-Tucky algorithm,
	finding \textit{twiddle factors} (trigonometric constants) with small oscillations,
	and approximating those factors using polynomial functions (Section \ref{Sec_twiddle_small}).

	\item \textbf{How can we decrease approximation costs?}
	The approach given above involves an approximating process, which would be computationally demanding.
	We propose using a \textit{base exponential function},
	by which all data-independent constants can be precomputed,
	so that one can bypass the approximation problem during the run-time (Section \ref{Sec_base_e} and Section \ref{Sec_range}).

	\item \textbf{How can we further reduce numerical computation?}
	We carefully reorder operations and factorize terms in order to alleviate the complexity of \method.
	Such techniques separate all data-independent factors from data-dependent factors,
	allowing further precomputation.
	The arithmetic cost of the resulting algorithm has an asymptotic upper bound $O(N+M \log M)$,
	where $N$ and $M$ are input and output size descriptors, respectively
	(Section \ref{Sec_sum} and Section \ref{Sec_time}).
\end{enumerate}
We describe details of \method from Section \ref{Sec_app} to Section \ref{Sec_sum},
The time complexity and the approximation bound of \method is analyzed in Section \ref{Sec_analy}.

\subsection{Approximation of twiddle factors} \label{Sec_app}

The key of our algorithm is to approximate a part of twiddle factors
with relatively small oscillations by using polynomial functions,
which reduces the computational complexity of DFT due to the mixture of many twiddle factors.
Using polynomial approximation also allows one to carefully control the degree of polynomial (or the number of approximating terms),
enabling fine-tuning the output range and the approximation bound of the estimation.
Our first goal is to find a collection of twiddle factors with small oscillations.
This can be achieved by slightly adjusting the summand of DFT and splitting the summation as in the Cooley-Tukey algorithm
(Section \ref{Sec_twiddle_small}).
Next, using a proper base exponential function, we give an explicit form of polynomial approximation to the twiddle factors
(Section \ref{Sec_base_e}).

\subsubsection{Twiddle factors with small oscillations} \label{Sec_twiddle_small}
Recall that the DFT is defined as follows:
\begin{align} \label{DFT}
\hat{a}_{m}=\sum_{n \in [N]}{a_{n}e^{-2\pi imn/N}},
\end{align}
where $\bm{a}$ is a complex-valued vector of size $N$, and $[\nu]$ denotes $\{0, 1, \cdots, \nu-1\}$ for a positive integer $\nu$. Assume that $N$ is a composite, so there exist $p, q>1$ such that $N=pq$.
The Cooley-Tukey algorithm re-expresses (\ref{DFT}):
\begin{equation} \label{DFTs}
\begin{split}
\hat{a}_{m}=\sum_{k \in [p]}\sum_{l\in [q]} a_{qk+l}e^{-2\pi im(qk+l)/N}
=\sum_{k \in [p]}\sum_{l\in [q]} a_{qk+l} e^{-2\pi iml/N} \cdot e^{-2\pi imk/p},
\end{split}
\end{equation}
yielding two collections of twiddle factors,
namely $\{e^{-2\pi iml/N}\}_{l\in [q]}$ and $\{e^{-2\pi imk/p}\}_{k\in [p]}$.
Consider the problem of computing $\hat{a}_{m}$ for $-M \leq m \leq M$,
where $M\leq N/2$ is a non-negative integer.
In this case, note that the exponent of $e^{-2\pi iml/N}$ ranges from $-2\pi i {M(q-1)}/{N}$ to $+2\pi i {M(q-1)}/{N}$
and that the exponent of $e^{-2\pi imk/p}$ ranges from $-2\pi i {M(p-1)}/{p}$ to $+2\pi i {M(p-1)}/{p}$.
Here $\frac{(q-1)/N}{(p-1)/p} \sim \frac{1}{p}$,
meaning that the first collection contains twiddle factors with smaller oscillations
compared to the second one.
Typically, a function with smaller oscillation results in a better approximation via polynomials.
In this sense, it is reasonable to approximate the first collection of twiddle factors in (\ref{DFTs}) with polynomial functions,
thereby reducing the complexity of the computation due to the mixture of two collections of twiddle factors.
Indeed, one can further reduce the complexity of approximation.
We slightly adjust the summand in (\ref{DFT}) and split it:
\begin{equation} \label{split_a}
\begin{split}
\hat{a}_{m}
&=\sum_{n \in [N]} {a_n e^{-2\pi im(n-q/2)/N} \cdot  e^{-\pi im /p}} \\
&=\sum_{k \in [p]}\sum_{l\in [q]} {a_{qk+l}e^{-2\pi im(l- q/2)/N} \cdot e^{-2\pi imk/p}  \cdot  e^{-\pi im/p} }.
\end{split}
\end{equation}
In (\ref{split_a}), we observe that the range of exponents of the first collection $\{e^{-2\pi im(l-q/2)/N}\}_{l\in [q]}$ of twiddle factors
 is $[-\pi i {M}/{p}, +\pi i {M}/{p}]$,
a contraction by a factor of around 2 when compared with $[-2\pi i {M(q-1)}/{N}, +2\pi i {M(q-1)}/{N}]$,
hence the twiddle factors with even smaller oscillations.
There is an extra twiddle factor $e^{-\pi im/p}$ in (\ref{split_a}).
Note that, however, it depends on neither $k$ nor $l$, so the amount of the additional computation is relatively small.

\subsubsection{Base exponential function} \label{Sec_base_e}
The first collection of twiddle factors in (\ref{split_a}) consists of $q$ distinct exponential functions.
One can apply approximation process for each function in the collection; however, this would be time-consuming.
A more plausible approach is to
1) choose a base exponential function $e^{uix}$ for some fixed $u \in \mathbb{R}$,
2) approximate $e^{uix}$ by using a polynomial,
and 3) exploit a property of exponential functions: the laws of exponents.
Specifically, suppose that we obtained a polynomial $\mathcal{P}(x)$ that approximates $e^{uix}$ on $|x| \leq |\xi|$,
where $u, \xi$ are non-zero real numbers. Consider another exponential function $e^{vix}$ where $v \neq 0$.
Since $e^{vix} = e^{ui(vx/u)}$, the re-scaled polynomial function $\mathcal{P}(vx/u)$ approximates $e^{vix}$ on $|x| \leq |u\xi/v|$.
This observation indicates that once we find an approximation $\mathcal{P}$ to $e^{uix}$ on $|x| \leq |\xi|$ for properly selected $u$ and $\xi$,
all elements belonging to $\{e^{-2\pi im(l-q/2)/N}\}_{l\in [q]}$ can be approximated by re-scaling $\mathcal{P}$.
Fixing a base exponential function also enables precomputing a polynomial that approximates it,
so that one can avoid solving the approximation problem during the run-time.
We further elaborate this idea in a rigorous manner after giving a few definitions (see Definitions \ref{def_P} and \ref{def_xi})
and present a theoretical approximation bound in Theorem \ref{thm_err_bd}.

Let $\| \cdot \|_R$ be the uniform norm (or supremum norm) restricted to a set $R \subseteq \mathbb{R}$, that is,
$\| f \|_R=\sup\{|f(x)|: x\in R\}$ and
$P_{\alpha}$ be the set of polynomials on $\mathbb{R}$ of degree at most $\alpha$.

\begin{definition} \label{def_P}
Given a non-negative integer $\alpha$ and non-zero real numbers $\xi, u$,
we define a polynomial $\mathcal{P}_{\alpha, \xi, u}$ as the best approximation to $e^{uix}$ out of the space $P_{\alpha}$
under the restriction $|x| \leq |\xi|$:
\begin{align*}
\mathcal{P}_{\alpha, \xi, u} := \operatorname*{arg\,min}_{P \in P_{\alpha}} \| P(x)-e^{uix} \|_{|x|\leq |\xi|},
\end{align*}
and $\mathcal{P}_{\alpha, \xi, u}=1$ when $\xi=0$ or $u=0$.
\QEDB
\end{definition}

\cite{smirnov1999best} proved the unique existence of such a polynomial.
Also, a few techniques called \textit{minimax approximation algorithms} for computing $\mathcal{P}_{\alpha, \xi, u}$
are reviewed in \cite{Fraser65}.

\begin{definition} \label{def_xi}
Given a tolerance $\epsilon>0$ and a positive integer $r\geq 1$,
we define $\xi(\epsilon,r)$ to be the scope about the origin such that the exponential function
$e^{\pi ix}$ can be approximated by a polynomial of degree less than $r$ with approximation bound $\epsilon$:
\begin{align*}
\xi(\epsilon,r) := \sup\{\xi \geq 0: \|\mathcal{P}_{r-1, \xi, \pi}(x)-e^{\pi ix} \|_{|x|\leq \xi}  \leq\epsilon \}.
\end{align*}
We express the corresponding polynomial as $\mathcal{P}_{r-1, \xi(\epsilon,r), \pi}(x) = \sum_{j \in [r]} w_{\epsilon,r-1,j}\cdot x^j$. \QEDB
\end{definition}
In Definition \ref{def_xi}, we choose $e^{\pi ix}$ as a base exponential function.
The rationale behind is as follows.
First, using a minimax approximation algorithm,
we precompute $\xi(\epsilon,r)$ and $\{w_{\epsilon,r-1,j}\}_{j \in [r]}$
for several tolerance $\epsilon$'s (e.g. $10^{-1}, 10^{-2}, \cdots$) and positive integer $r$'s (typically $1\leq r \leq 25$).
When $N,M,p$ and $\epsilon$ are given, we find the minimum $r$ satisfying $\xi(\epsilon,r) \geq M/p$.
Then, by the preceding argument, it follows that the re-scaled polynomial function
$\mathcal{P}_{r-1, \xi(\epsilon,r), \pi}(-2x(l-q/2)/N)$ approximates
$e^{-2\pi ix(l-q/2)/N}$ on $|x| \leq |\frac{N}{2(l-q/2)} \cdot \frac{M}{p}|$ for each $l \in [q]$
(note that if $l-q/2=0$, we have $|\frac{N}{2(l-q/2)} \cdot \frac{M}{p}|=\infty$).
Here $ |\frac{N}{2(l-q/2)} \cdot \frac{M}{p}| = |\frac{q}{2l-q} \cdot M| \geq M$ for all $l \in [q]$.
Therefore, we obtain a polynomial approximation on $|m|\leq M$ for each twiddle factor in $\{e^{-2\pi im(l-q/2)/N}\}_{l\in [q]}$,
namely $\{ \mathcal{P}_{r-1, \xi(\epsilon,r), \pi}(-2m(l-q/2)/N) \}_{l \in [q]}$.
Then, it follows from (\ref{split_a}) that
\begin{equation} \label{approx_a}
\begin{split}
\hat{a}_{m}
&\approx \sum_{k \in [p]}\sum_{l\in [q]} {a_{qk+l} \ \mathcal{P}_{r-1, \xi(\epsilon,r), \pi}(-2m(l-q/2)/N) \cdot e^{-2\pi imk/p} \cdot e^{-\pi im/p}}.
\end{split}
\end{equation}
Thus, we obtain an estimation of $\hat{a}_{m}$ for $-M \leq m \leq M$ by approximating the first collection of twiddle factors in (\ref{split_a}).

\subsection{Arbitrarily centered target ranges} \label{Sec_range}
In the previous section, we have focused on the problem of calculating $\hat{a}_m$ for $m$ belonging to $[-M,M]$.
We now consider a more general case:
let us use the term \textbf{target range} to indicate the range where the Fourier coefficients should be calculated,
and $R_{\mu, M}$ to denote $[\mu-M,\mu+M] \cap \mathbb{Z}$, where $\mu \in \mathbb{Z}$.
Note that the previously given method works only when our target range is centered at $\mu=0$.
A slight modification of the algorithm allows the target range to be arbitrarily centered.
One possible approach is as follows: given a complex-valued vector $\bm{x}$ of size $N$, we define $\bm{y}$ as
$y_n = x_n \cdot e^{-2\pi i \mu n/N}$.
Then, the Fourier coefficients of $\bm{x}$ and $\bm{y}$ satisfy the following relationship:
\begin{align*}
\hat{y}_{m} &=\sum_{n \in [N]} x_n \cdot e^{-2\pi i \mu n/N} \cdot e^{-2\pi i m n/N}
 =\sum_{n \in [N]} x_n \cdot e^{-2\pi i (m+\mu) n/N}
 =\hat{x}_{m+\mu}.
\end{align*}
Therefore, the problem of calculating $\hat{x}_{m}$ for $m \in R_{\mu, M}$ is
equivalent to calculating $\hat{y}_{m}$ for $m \in R_{0,M}$, to which our previous method can be applied.
This technique, however, requires extra $N$ multiplications due to the computation of $\bm{y}$.

A better approach, where one can bypass the extra process during the run-time, is to exploit the following lemma
(see Appendix \ref{proof_translation} for the proof).
\begin{lemma} \label{lem_translation}
Given a non-negative integer $\alpha$, non-zero real numbers $\xi, u$, and any real number $\mu$, the following equality holds:
\[
	e^{ui\mu} \cdot \mathcal{P}_{\alpha, \xi, u}(x-\mu)
	= \operatorname*{arg\,min}_{P \in P_{\alpha}} \| P(x)-e^{uix}\|_{|x-\mu|\leq |\xi|}.
\]
\end{lemma}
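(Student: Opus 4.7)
The plan is a clean change of variables argument that reduces the shifted approximation problem to the centered one already addressed by Definition~\ref{def_P}. Set $y=x-\mu$ so that the constraint $|x-\mu|\leq|\xi|$ becomes $|y|\leq|\xi|$, and use the identity $e^{uix}=e^{ui(y+\mu)}=e^{ui\mu}\cdot e^{uiy}$.

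First, I would establish a bijection $\Phi:P_\alpha\to P_\alpha$ between the spaces of admissible polynomials. Given $P\in P_\alpha$, define $Q(y)=e^{-ui\mu}P(y+\mu)$. Since translation by $\mu$ preserves polynomial degree and multiplication by the nonzero constant $e^{-ui\mu}$ likewise preserves degree, $Q\in P_\alpha$, and the inverse $Q\mapsto e^{ui\mu}Q(\cdot-\mu)$ shows $\Phi$ is a bijection.

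Next, I would track how the uniform norm transforms under $\Phi$. Using $|e^{ui\mu}|=1$ (which follows because $u$ and $\mu$ are real),
\begin{align*}
\|P(x)-e^{uix}\|_{|x-\mu|\leq|\xi|}
&=\|P(y+\mu)-e^{ui\mu}e^{uiy}\|_{|y|\leq|\xi|}\\
&=|e^{ui\mu}|\cdot\|e^{-ui\mu}P(y+\mu)-e^{uiy}\|_{|y|\leq|\xi|}\\
&=\|Q(y)-e^{uiy}\|_{|y|\leq|\xi|}.
\end{align*}
Hence minimizing the left-hand norm over $P\in P_\alpha$ is equivalent, via $\Phi$, to minimizing the right-hand norm over $Q\in P_\alpha$. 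By Definition~\ref{def_P}, the unique minimizer of the right-hand side is $Q=\mathcal{P}_{\alpha,\xi,u}$. Transporting back through $\Phi^{-1}$ yields the desired minimizer $P(x)=e^{ui\mu}\mathcal{P}_{\alpha,\xi,u}(x-\mu)$.

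There is no real obstacle here; the only subtlety is uniqueness, which is needed so that the $\operatorname*{arg\,min}$ is well-defined on both sides. This is supplied by the Smirnov result cited after Definition~\ref{def_P}, applied on the interval $|y|\leq|\xi|$; the bijection $\Phi$ then transfers uniqueness to the shifted problem, completing the proof.
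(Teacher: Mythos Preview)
Your proof is correct and follows essentially the same approach as the paper's: both arguments use the bijection $P\mapsto e^{-ui\mu}P(\cdot+\mu)$ on $P_\alpha$ together with $|e^{ui\mu}|=1$ to reduce the shifted minimization to the centered one defining $\mathcal{P}_{\alpha,\xi,u}$. The paper presents this as a chain of equalities on the $\operatorname*{arg\,min}$ expression, whereas you frame it explicitly as a change of variables and a bijection, but the content is identical.
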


This observation implies that, in order to obtain a polynomial approximating $e^{uix}$ on $|x-\mu| \leq |\xi|$,
we first find a polynomial $\mathcal{P}$ approximating $e^{uix}$ on $|x| \leq |\xi|$,
then translate $\mathcal{P}$ by $-\mu$ and multiply it with the scalar $e^{ui\mu}$.
Applying this process to the previously obtained approximation polynomials (see Section \ref{Sec_base_e})
yields
$
\{e^{-2\pi i \mu(l- q/2)/N} \cdot \mathcal{P}_{r-1, \xi(\epsilon,r), \pi}(-2(m-\mu)(l-q/2)/N)\}_{l \in [q]}.
$
We substitute these polynomials for the twiddle factors $\{e^{-2\pi im(l-q/2)/N}\}_{l \in [q]}$ in (\ref{split_a}),
which gives the following estimation of $\hat{a}_m$ for $m \in R_{\mu,M}$:
\begin{equation} \label{approx}
\begin{split}
\hat{a}_{m}
&\approx \sum_{k,l} {a_{qk+l} \ e^{-2\pi i \mu(l- q/2)/N} \cdot \mathcal{P}_{r-1, \xi(\epsilon,r), \pi}(-2(m-\mu)(l-q/2)/N) \cdot e^{-2\pi imk/p} \cdot e^{-\pi im/p}} \\
&=\sum_{k,l} {a_{qk+l} \ e^{-2\pi i \mu(l- q/2)/N} \sum_{j} w_{\epsilon,r-1,j} \ (-2(m-\mu)(l-q/2)/N)^j \cdot e^{-2\pi imk/p} \cdot e^{-\pi im/p}} \\
&=\sum_{j}\sum_{k,l} {a_{qk+l} \ e^{-2\pi i \mu(l- q/2)/N} \ w_{\epsilon,r-1,j} \bigg(\frac{m-\mu}{p} \bigg)^j \bigg(1-\frac{2l}{q} \bigg)^j \cdot e^{-2\pi imk/p} \cdot e^{-\pi im/p}},
\end{split}
\end{equation}
where $k \in [p], l \in [q]$, and $j \in [r]$.

\subsection{Efficient Summations} \label{Sec_sum}
We have found that three main summation steps (each being over $j, k$ and $l$) take place when computing the partial Fourier coefficients.
Note that in (\ref{approx}), the innermost summation $\sum_{j}$ is moved to the outermost position,
and the term $-2(m-\mu)(l-q/2)/N$ is factorized into two independent terms, $(m-\mu)/p$ and $1-2l/q$.
Interchanging the order of summations and factorizing the term result in a significant computational benefit;
we elucidate what operator we should utilize for each summation
and how we can save the arithmetic costs from it.
As we will see, the innermost sum over $l$ corresponds to a matrix multiplication,
the second sum over $k$ can be viewed as multiple DFTs,
and the outermost sum over $j$ is an inner product.

For the first sum, let $A=(a_{kl})=a_{qk+l}$ and
$B=(b_{lj})=e^{-2\pi i \mu(l- q/2)/N} \ w_{\epsilon,r-1,j} \ (1-2l/q)^{j}$,
so that (\ref{approx}) can be written as follows:
\begin{equation*} 
\begin{split}
\hat{a}_{m} &\approx
\sum_{j \in [r]}\sum_{k \in [p]}\sum_{l \in [q]} {a_{kl}  b_{lj} \cdot e^{-2\pi imk/p} \cdot ((m-\mu)/p)^j \cdot e^{-\pi im/p}}.
\end{split}
\end{equation*}
Here, note that the matrix $B$ is data-independent (not dependent on $\bm{a}$), and thus can be precomputed.
Indeed, we have already seen that $\{w_{\epsilon,r-1,j}\}_{j \in [r]}$ can be precomputed.
The other factors $e^{-2\pi i \mu(l-q/2)/N}$ and $(1-2l/q)^j$
composing the elements of $B$ can also be precomputed if $(N, M, \mu, p, \epsilon)$ is known in advance.
Thus, as long as the setting $(N, M, \mu, p, \epsilon)$ is unchanged, we can reuse the matrix $B$
for any input data $\bm{a}$ once the configuration phase of \method is completed (Algorithm \ref{algo_config}).
We shall denote the multiplication $A\times B$ as $C=(c_{kj})$:
\begin{align} \label{FFTs}
\hat{a}_{m} &\approx
\sum_{j \in [r]}\sum_{k \in [p]} c_{kj} \cdot e^{-2\pi imk/p}\cdot ((m-\mu)/p)^j \cdot e^{-\pi im/p}.
\end{align}
For each $j \in [r]$, the summation $\hat{c}_{j;m} = \sum_{k \in [p]} c_{kj} \cdot e^{-2\pi imk/p}$ is a DFT of size $p$.
We perform FFT $r$ times for this computation
and denote the corresponding Fourier coefficient as $\hat{c}_{j;m}$, which yields the following estimation of $\hat{a}_m$:
\begin{align} \label{IPs}
\hat{a}_{m}\approx\sum_{j \in [r]}\hat{c}_{j;m} \cdot ((m-\mu)/p)^j \cdot e^{-\pi im/p}.
\end{align}
Note that $\hat{c}_{j;m}$ is a periodic function of period $p$ with respect to $m$,
so we use the coefficient at $m$ modulo $p$ when $m<0$ or $m \geq p$.
Therefore, the $m^{th}$ Fourier coefficient of $\bm{a}$ can be calculated approximately
by the inner product of $\hat{c}_{j;m}$ and $((m-\mu)/p)^j$ with respect to $j$,
followed by a multiplication with the extra twiddle factor $e^{-\pi im/p}$ (we also precompute $((m-\mu)/p)^j$ and $e^{-\pi im/p}$).
The full computation is outlined in Algorithm \ref{algo_pft}.

\IncMargin{1em}
\begin{algorithm} \label{algo_config} 
		\SetKwFunction{FFT}{FFT}
		\SetKwInOut{Input}{input}\SetKwInOut{Output}{output}
		\Input{Input size $N$, output descriptors $M$ and $\mu$, divisor $p$, and tolerance $\epsilon$}
		\Output{Matrix $B$, divisor $p$, and numbers of rows and columns, $q$ and $r$}
		\BlankLine
		$q \leftarrow N/p$ \\
		$r \leftarrow \min \{ r\in\mathbb{N}: \xi(\epsilon, r) \geq {M}/{p} \}$
		\tcp*[f]{Use precomputed $\xi(\epsilon, r)$ } \\
		\For{ $(l, j) \in [q] \times [r]$ }{
		$B[l,j] \leftarrow e^{-2\pi i \mu(l-q/2)/N} \cdot w_{\epsilon,r-1,j} \cdot (1-2l/q)^j$
		\tcp*[f]{Use precomputed $w_{\epsilon,r-1,j}$ } \\
		}
		\caption{Configuration phase of \method}
\end{algorithm}
\DecMargin{1em}

\IncMargin{1em}
\begin{algorithm} \label{algo_pft} 
	\SetKwFunction{FFT}{FFT}
	\SetKwInOut{Input}{input}\SetKwInOut{Output}{output}
	\Input{Vector $\bm{a}$ of size $N$, output descriptors $M$ and $\mu$,
		and configuration results $B, p, q, r$}
	\Output{Vector $\mathcal{E}(\hat{\bm{a}})$ of estimated Fourier coefficients of $\bm{a}$ for $[\mu-M, \mu+M]$}
	\BlankLine
	$A[k,l] \leftarrow a_{qk+l}$ for $k \in [p]$ and $l \in [q]$ \\
	$C \leftarrow A \times B $ \\
	\For{ $j \in [r]$ }{
		$\hat{C}[m, j] \leftarrow \FFT(C[k,j])$ with respect to $k \in [p]$
	}
	\For{ $m \in [\mu-M, \mu+M]$  }{
		$\mathcal{E}(\hat{\bm{a}})[m] \leftarrow \sum_{j=0}^{r-1} \hat{C}[m\%p, j]\cdot ((m-\mu)/p)^j \cdot e^{-\pi im /p}$
	}
	\caption{Computation phase of \method}
\end{algorithm}
\DecMargin{1em}

\subsection{Theoretical analysis} \label{Sec_analy}
We give theoretical analysis regarding the time complexity of \method
as well as its approximation bound.

\subsubsection{Time complexity} \label{Sec_time}
We analyze the time complexity of \method.
Theorem \ref{thm_time_com} shows that the time cost $T(N,M)$ of \method,
where $N$ and $M$ are input and output size descriptors, respectively,
is bounded by $O(N+M \log M)$, provided $N$ has \textit{sufficiently} many divisors.\footnote{
	Note that, in practice, this necessity is not a big concern because
	one can readily control the input size with basic techniques such as zero-padding or re-sampling.
}
When $M\ll N$, this is a significant improvement compared to the conventional FFT
which never benefits from the information of $M$, resulting in $O(N \log N)$ time cost, consistently.
Before presenting the theorem,
we consider an explicit condition which guarantees the ``sufficiently many divisors'' property.
A positive integer is called $b$-\textbf{smooth} if none of its prime factors is greater than $b$.
For example, the 2-smooth integers are equivalent to the powers of 2.
Lemma \ref{lem_smooth} says that if $N$ is a smooth number, then given any $0 < M \leq N$,
one can always find a divisor of $N$ that is tightly bounded by $\Theta(M)$.
We leave the proofs of the lemma and theorem in Appendices \ref{proof_smooth} and \ref{proof_time_com}.

\begin{lemma} \label{lem_smooth}
	Let $b \geq 2$. If $N$ is $b$-smooth and $M \leq N$ is a positive integer,
	then there exists a positive divisor $p$ of $N$ satisfying ${M}/{\sqrt{b}} \leq p < \sqrt{b} M$.
\end{lemma}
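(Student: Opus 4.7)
The plan is to construct a chain of divisors of $N$ whose consecutive ratios are uniformly bounded by $b$, and then select the element of the chain sitting just below $\sqrt{b}\,M$.

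The key ingredient I would isolate first is a \emph{small-gap} lemma: if $d$ is a positive divisor of $N$ with $d < N$, then there exists a divisor $d'$ of $N$ with $d < d' \leq bd$. The argument is short. Since $d < N$, the integer $N/d$ is greater than $1$, and because $N/d$ divides the $b$-smooth number $N$, it is itself $b$-smooth. Hence $N/d$ has a prime factor $q \leq b$, and $d' := qd$ is a divisor of $N$ satisfying $d < d' \leq bd$.

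Iterating the small-gap lemma, starting from $1$, produces a finite increasing sequence $1 = d_0 < d_1 < \cdots < d_s = N$ of divisors of $N$ with $d_{i+1}/d_i \leq b$ for every $i$. Given $M$ with $1 \leq M \leq N$, observe that $d_0 = 1 < \sqrt{b}\,M$ (using $\sqrt{b} \geq \sqrt{2} > 1$ and $M \geq 1$), so the index $i := \max\{\,j : d_j < \sqrt{b}\,M\,\}$ is well-defined; set $p := d_i$. The upper bound $p < \sqrt{b}\,M$ is immediate from the choice of $i$. For the lower bound, I split into two cases. If $i = s$, then $p = N \geq M \geq M/\sqrt{b}$. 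If $i < s$, then by maximality $d_{i+1} \geq \sqrt{b}\,M$, and the ratio bound gives $p = d_i \geq d_{i+1}/b \geq \sqrt{b}\,M/b = M/\sqrt{b}$. In either case $p$ is a divisor of $N$ lying in $[M/\sqrt{b},\,\sqrt{b}\,M)$, as required.

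The only genuinely nontrivial step is the small-gap lemma, which is where $b$-smoothness is used in an essential way; once it is in hand, the rest is a pigeonhole argument on the chain. The main things to watch out for are the extreme cases $M = 1$ (which may force $p = 1$) and $M$ close to $N$ (which may force $p = N$), but both are handled cleanly by the two-case split above, and nothing more delicate is needed.
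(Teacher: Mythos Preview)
Your proof is correct and rests on the same key observation as the paper's: from any proper divisor $d$ of a $b$-smooth $N$ one can pass to a divisor $d'$ with $d < d' \leq bd$ (because $N/d$ has a prime factor at most $b$). The only difference is packaging---the paper enumerates \emph{all} divisors and argues by contradiction that some must land in $[M/\sqrt{b},\sqrt{b}M)$, whereas you build a single chain and select directly---but the mathematical content is identical.
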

\begin{theorem}
\label{thm_time_com}
Fix a tolerance $\epsilon>0$ and an integer $b\geq 2$. If $N$ is $b$-smooth,
then the time complexity $T(N,M)$ of \method has an asymptotic upper bound $O(N+M\log M)$.
\end{theorem}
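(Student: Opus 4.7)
The plan is to read off the cost of each line of Algorithm~\ref{algo_pft} in terms of $N$, $p$, $q$, and $r$, and then to use Lemma~\ref{lem_smooth} to pick a divisor $p$ of $N$ with $p = \Theta(M)$, so that $q = N/p = \Theta(N/M)$, and to argue separately that $r = O(1)$ under the fixed tolerance $\epsilon$ and smoothness parameter $b$. Together these collapse the per-line costs to the desired $O(N + M\log M)$ bound.

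\textbf{Step 1: choose $p$ via smoothness.} Since $N$ is $b$-smooth, Lemma~\ref{lem_smooth} (applied with the output-size descriptor $M$, which may be assumed $\leq N$; otherwise there is nothing to prove) yields a divisor $p$ of $N$ with $M/\sqrt b \leq p < \sqrt b\, M$. Thus $p = \Theta(M)$ and $q = N/p = \Theta(N/M)$, and in particular $M/p \leq \sqrt b$.

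\textbf{Step 2: bound $r$.} The integer $r$ used by Algorithm~\ref{algo_pft} is the smallest positive integer with $\xi(\epsilon, r) \geq M/p$. From Step~1, $M/p \leq \sqrt b$, so it suffices to show there exists a finite $r^\star = r^\star(\epsilon, b)$ with $\xi(\epsilon, r^\star) \geq \sqrt b$. By the Weierstrass approximation theorem (applied to the real and imaginary parts of $e^{\pi i x}$ on the compact interval $[-\sqrt b, \sqrt b]$), for every $\epsilon > 0$ there is a polynomial $P$ with $\|P(x) - e^{\pi i x}\|_{|x|\leq \sqrt b} \leq \epsilon$; taking $r^\star - 1 = \deg P$ and using Definition~\ref{def_xi} gives $\xi(\epsilon, r^\star) \geq \sqrt b$. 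Hence $r \leq r^\star(\epsilon, b) = O(1)$ once $\epsilon$ and $b$ are fixed. This is the only nontrivial point of the argument.

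\textbf{Step 3: sum the per-line costs.} The reshaping in line~1 is $O(N)$. The matrix product $C = A \times B$ in line~2 has cost $O(pqr) = O(Nr)$. The $r$ FFTs of length $p$ in lines~3--4 cost $O(rp\log p)$ in total. The output loop in lines~5--6 visits $2M+1$ indices and does $O(r)$ work at each, giving $O(Mr)$. Substituting $r = O(1)$ and $p = \Theta(M)$ from Steps~1--2, these become $O(N)$, $O(N)$, $O(M\log M)$, and $O(M)$, respectively; the configuration phase in Algorithm~\ref{algo_config} is a one-time cost that is also dominated by these terms (it is $O(qr) = O(N)$). Summing, $T(N,M) = O(N + M\log M)$, which is the claim.

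\textbf{Main obstacle.} The only genuinely nonobvious step is Step~2: controlling $r$ uniformly. Everything else is bookkeeping over the pseudocode. The key insight is that $r$ depends on $M/p$, not on $M$ or $N$ individually, and the smoothness of $N$ is exactly what lets us pin $M/p$ to an interval of bounded length, after which qualitative density of polynomials in $C([-\sqrt b, \sqrt b])$ suffices.
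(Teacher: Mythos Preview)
Your proof is correct and follows essentially the same approach as the paper: derive the intermediate bound $T(N,M) = O(r\cdot(N + p\log p + M))$ from the three summation steps, then use Lemma~\ref{lem_smooth} to choose $p = \Theta(M)$ so that $M/p$ is bounded and hence $r = O(1)$. Your Step~2 is in fact more careful than the paper's, which simply asserts that bounded $M/p$ implies bounded $r$ without invoking Weierstrass explicitly.
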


\subsubsection{Approximation bound}
We now give a theoretical approximation bound of the estimation via the polynomial $\mathcal{P}$.
We denote the estimated Fourier coefficient of $\bm{a}$ as $\mathcal{E}(\hat{\bm{a}})$.
The following theorem states that
the approximation bound is data-dependent of the total weight $\| \bm{a} \|_1$ of the original vector,
where $\|\cdot \|_1$ denotes the $\ell_1$ norm, and the given tolerance $\epsilon$ (see Appendix \ref{proof_err_bd} for the proof).
\begin{theorem} \label{thm_err_bd}
Given a tolerance $\epsilon>0$, the following inequality holds:
\begin{align*}
\|\hat{\bm{a}}-\mathcal{E}(\hat{\bm{a}})\|_{R_{\mu,M}}  \leq  \| \bm{a} \| _1 \cdot \epsilon,
\end{align*}
where $R_{\mu,M}$ is the target range.
\end{theorem}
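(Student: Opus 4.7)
The plan is to bound the pointwise error $|\hat a_m - \mathcal{E}(\hat a)_m|$ for each $m\in R_{\mu,M}$ by $\|\bm a\|_1\cdot\epsilon$ via the triangle inequality, using the fact that the only source of error in deriving (\ref{approx}) from (\ref{split_a}) is the replacement of each twiddle factor $e^{-2\pi im(l-q/2)/N}$ by its rescaled--and--translated polynomial surrogate.

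First I would fix an arbitrary $l\in[q]$ and set $u_l=-2\pi(l-q/2)/N$, so the twiddle factor in question is $e^{u_l im}$. By Definition~\ref{def_xi}, the polynomial $\mathcal{P}_{r-1,\xi(\epsilon,r),\pi}$ approximates $e^{\pi ix}$ on $|x|\le\xi(\epsilon,r)$ with uniform error at most $\epsilon$. A change of variables $x\mapsto u_l x/\pi$ (the rescaling argument of Section~\ref{Sec_base_e}) turns this into the statement that $\mathcal{P}_{r-1,\xi(\epsilon,r),\pi}(u_l x/\pi)$ approximates $e^{u_l ix}$ on $|x|\le |\pi\xi(\epsilon,r)/u_l|$ with the \emph{same} uniform bound $\epsilon$, since rescaling the argument does not affect values attained by the two functions. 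By the choice of $r$ in Algorithm~\ref{algo_config}, $\xi(\epsilon,r)\ge M/p$, so the valid domain contains $|x|\le|q/(2l-q)|\cdot M\ge M$. Applying Lemma~\ref{lem_translation} with shift $\mu$, the translated polynomial $e^{u_l i\mu}\cdot\mathcal{P}_{r-1,\xi(\epsilon,r),\pi}(u_l(x-\mu)/\pi)$ then approximates $e^{u_l ix}$ on $|x-\mu|\le M$ with uniform error at most $\epsilon$; in particular, for every $m\in R_{\mu,M}$,
\begin{equation*}
\bigl|\,e^{-2\pi im(l-q/2)/N} - e^{-2\pi i\mu(l-q/2)/N}\cdot\mathcal{P}_{r-1,\xi(\epsilon,r),\pi}\!\bigl(-2(m-\mu)(l-q/2)/N\bigr)\bigr|\le\epsilon.
\end{equation*}

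Next I would subtract (\ref{approx}) from (\ref{split_a}) term by term. The factors $e^{-2\pi imk/p}$ and $e^{-\pi im/p}$ are unchanged between the two expressions and have modulus $1$, so they pull out harmlessly under the triangle inequality. This yields
\begin{equation*}
\bigl|\hat a_m - \mathcal{E}(\hat a)_m\bigr|\;\le\;\sum_{k\in[p]}\sum_{l\in[q]}|a_{qk+l}|\cdot\epsilon\;=\;\|\bm a\|_1\cdot\epsilon,
\end{equation*}
and taking the supremum over $m\in R_{\mu,M}$ finishes the proof.

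The only delicate point is verifying that the rescaling and the translation each preserve the uniform error $\epsilon$ without any blow-up. Rescaling is immediate because the map $x\mapsto u_l x/\pi$ is a bijection between the two domains and the error is measured by supremum; translation is handled by Lemma~\ref{lem_translation}, where the prefactor $e^{ui\mu}$ ensures that the new polynomial is again the best approximant on the shifted interval, so its uniform error on $|x-\mu|\le|\xi|$ equals the uniform error of $\mathcal{P}_{\alpha,\xi,u}$ on $|x|\le|\xi|$. With these two observations in hand, the rest of the argument reduces to a single application of the triangle inequality.
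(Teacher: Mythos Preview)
Your proposal is correct and follows essentially the same approach as the paper's proof: bound the per-term approximation error by $\epsilon$ via the definition of $\xi(\epsilon,r)$ and the choice of $r$, pull out the unit-modulus factors $e^{-2\pi imk/p}$ and $e^{-\pi im/p}$, and sum over $k,l$ with the triangle inequality to obtain $\|\bm a\|_1\cdot\epsilon$. The only cosmetic difference is that the paper collapses your separate rescaling and translation steps into a single substitution $x'=v(x-\mu)$ with $v=-2(l-q/2)/N$, so it never needs to invoke Lemma~\ref{lem_translation} explicitly; the content of the argument is identical.
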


\section{Experiments}
    \label{sec:experiments}
    Through experiments, the following questions should be answered:

\begin{itemize}
	\item \textbf{Q1. Run-time cost (Section~\ref{subsec:runtime}).}
	How quickly does \method compute a part of Fourier coefficients compared to
	other competitors without sacrificing accuracy?
	\item \textbf{Q2. Effect of hyper-parameter $p$ (Section~\ref{subsec:hyperparam}).}
	How the different choices of divisor $p$ of input size $N$ affect the overall performance of \method?
	\item \textbf{Q3. Anomaly detection (Section~\ref{subsec:anomaly}).}
	How well does \method work for a practical application using FFT (anomaly detection)?
\end{itemize}

\subsection{Experimental setup}
\textbf{Machine.}
All experiments are performed on a machine equipped with Intel Core i7-6700HQ @ 2.60GHz and 8GB of RAM.

\textbf{Datasets.}
We use both synthetic and real-world datasets listed in Table \ref{tab_data}.

\begin{table}[t]
	\centering
		\caption{Detailed information of datasets.}
		\begin{tabular}{@{}llll@{}}
			\toprule
			\textbf{Dataset} & \textbf{Type} & \textbf{Size} & \textbf{Description} \\
			\midrule
			$\{\mathrm{S}_n\}_{n=12}^{22}$ & Synthetic & $2^n$ & Vectors of random real numbers between 0 and 1 \\
			Urban Sound \tablefootnote{\url{https://urbansounddataset.weebly.com/urbansound8k.html}}
			& Real-world & 32000 & Various sound recordings in urban environment \\
			Air Condition \tablefootnote{\url{https://archive.ics.uci.edu/ml/datasets/Appliances+energy+prediction}}
			& Real-world & 19735 & Time series vectors of air condition information \\
			\bottomrule
		\end{tabular}
		\label{tab_data}
\end{table}

\textbf{Competitors.}
We compare \method with two state-of-the-art FFT algorithms, FFTW and MKL.
All of them including \method are implemented in C++.

\begin{enumerate}
	\item \textbf{FFTW}: FFTW\footnote{\url{http://www.fftw.org/index.html}}
	is one of the fastest public implementation for FFT, offering a hardware-specific optimization.
	We use the optimized version of FFTW 3.3.5,
	and do not include the pre-processing for the optimization as the run-time cost.
	\item \textbf{MKL}: Intel Math Kernel Library\footnote{\url{http://software.intel.com/mkl}} (MKL)
	is a library of optimized math routines including FFT, and often shows a better run time result than the FFTW.
	All the experiments are conducted with an Intel processor for the best performance.
	\item \textbf{\method (proposed)}: we use MKL BLAS routines for the matrix multiplication,
	MKL DFTI functions for the batch FFT computation,
	and Intel Integrated Performance Primitives (IPP) library for the post-processing steps
	such as inner product and element-wise multiplication.
\end{enumerate}

\textbf{Measure.}
In all experiments, we use single-precision floating-point format,
and the parameters $p$ and $\epsilon$ are chosen so that the relative $\ell_2$ error is strictly less than $10^{-6}$,
which ensures that the overall estimated coefficients have at least 6 significant figures.
Explicitly,
\[
\textrm{Relative $\ell_2$ Error} =
\sqrt{ \frac{\sum_{m \in \mathcal{R}} |\hat{a}_m - \mathcal{E}(\hat{a})_m |^2}
					{\sum_{m \in \mathcal{R}} |\hat{a}_m |^2 } } < 10^{-6},
\]
where $\hat{\bm{a}}$ is the actual coefficient,
$\mathcal{E}(\hat{\bm{a}})$ is the estimated coefficient,
and $\mathcal{R}$ is the target range.

\subsection{Run-time cost} \label{subsec:runtime}

\textbf{Run time vs. input size.}
We fix the target range to $R_{0, 2^9}$
and evaluate the run time of \method vs. input sizes $N: 2^{12}, 2^{13}, \cdots, 2^{22}$.
The results are averaged over 10 thousands runs.
Figure \ref{E_N} shows how the three competitive algorithms scale with varying input size,
wherein \method outperforms both FFTW and MKL
provided that the output size is small enough $(<10\%)$ compared to the input size.
Consequently, \method achieves up to $21 \times$ speedup compared to its competitors.
Due to the overhead of the $O(N)$ pre- and $O(M)$ post-processing steps, 
we see that \method runs slower than FFT when $M$ is close to $N$, so the
time complexity tends to $O(N+N \log N)$.

\textbf{Run time vs. output size.}
In this experiment, we fix the input size to $N=2^{22}$
and evaluate the run time of \method vs. target ranges $R_{0,2^9}, R_{0,2^{10}}, \cdots, R_{0,2^{18}}$.
The result is illustrated as a run time vs. output size plot (recall that $|R_{0,M}| \simeq 2M$) in Figure \ref{E_M},
where each point is an average over 10 thousands runs.
Note that the run times of FFTW and MKL are consistent
because they do not benefit from the information of the output size $M$.
We find that when the output size is sufficiently smaller than the input size,
\method is up to $13.8 \times$ faster than the other competitors.

\begin{figure}[t!]
	\centering
	\subfigure[]{
		\includegraphics[width=0.48\linewidth]{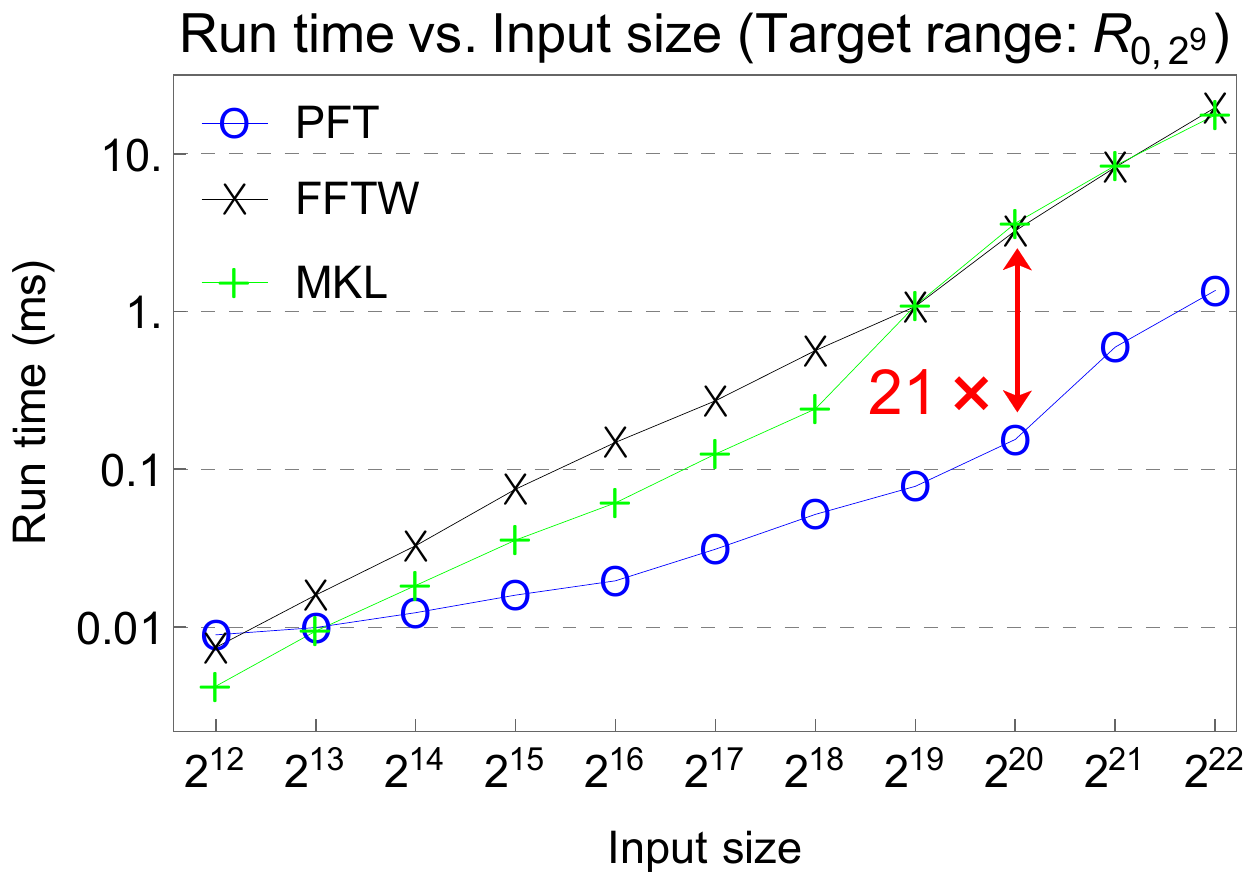}
		\label{E_N}
	}
	\centering
	\subfigure[]{
		\includegraphics[width=0.48\linewidth]{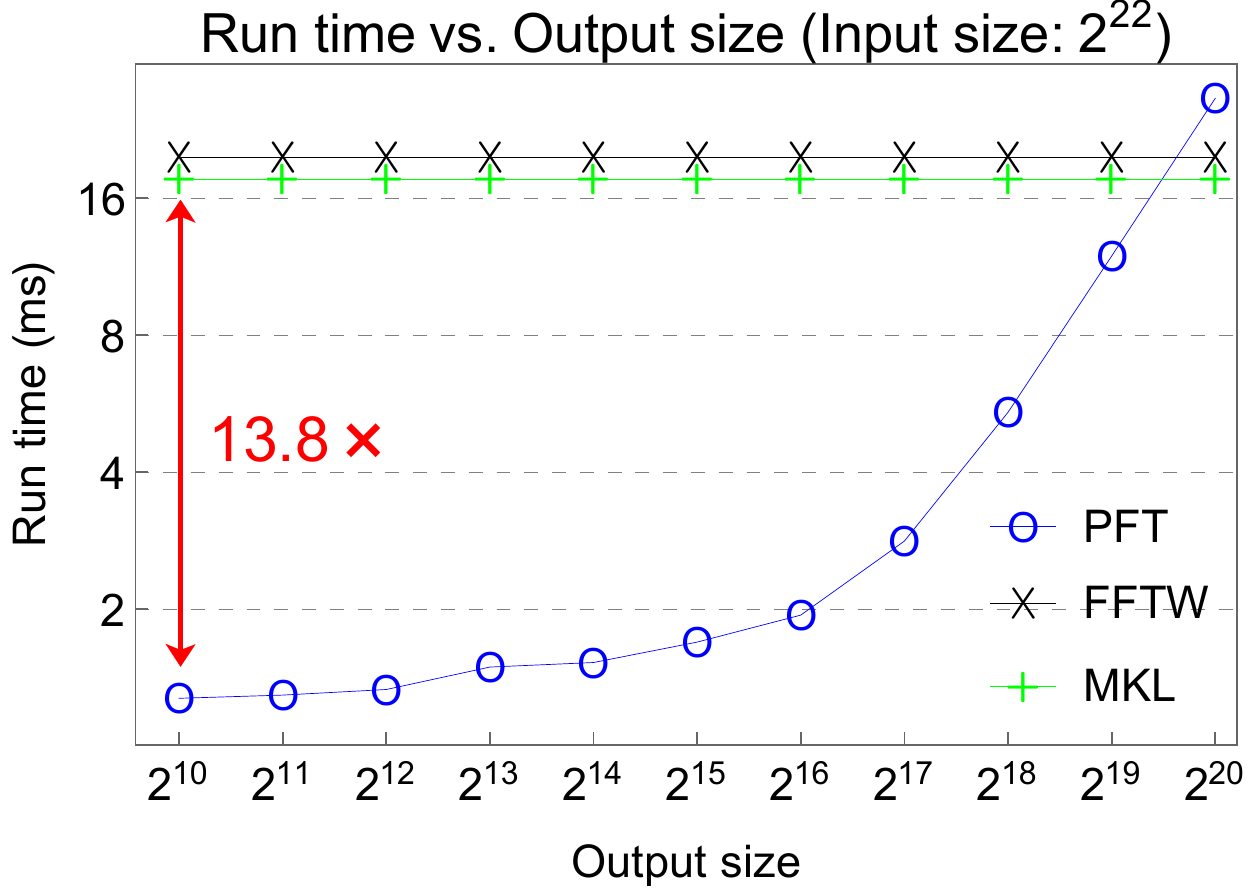}
		\label{E_M}
	}
	\caption{
			\textbf{(a) Run time vs. input size} for target range $R_{0,2^9}$ with $\{\mathrm{S}_n\}_{n=12}^{22}$ datasets, and
			\textbf{(b) run time vs. output size} for $\mathrm{S}_{22}$.
			We observe that \method outperforms both FFTW and MKL
			if the output size is small enough compared to the input size.
			When the output is sufficiently smaller than the input,
			the performance gain is significant: an order of magnitude of speedup.
    }
\end{figure}

\textbf{Real-world data.}
When it comes to real-world data, it is not generally the case that the size of an input vector is a power of 2.
Notably, \method still shows a promising performance
regardless of the fact that the input size is not a power of 2
or not even a \textit{highly composite}\footnotemark number:
a strong indication that our proposed technique is robust for many different applications in real-world.
\footnotetext{By this term, we refer to $b$-smooth integers for sufficiently small $b$ such as $b \leq 7$.}

\begin{figure}[t]
	\centering
	\subfigure[Urban Sound]{
		\includegraphics[width=0.48\linewidth]{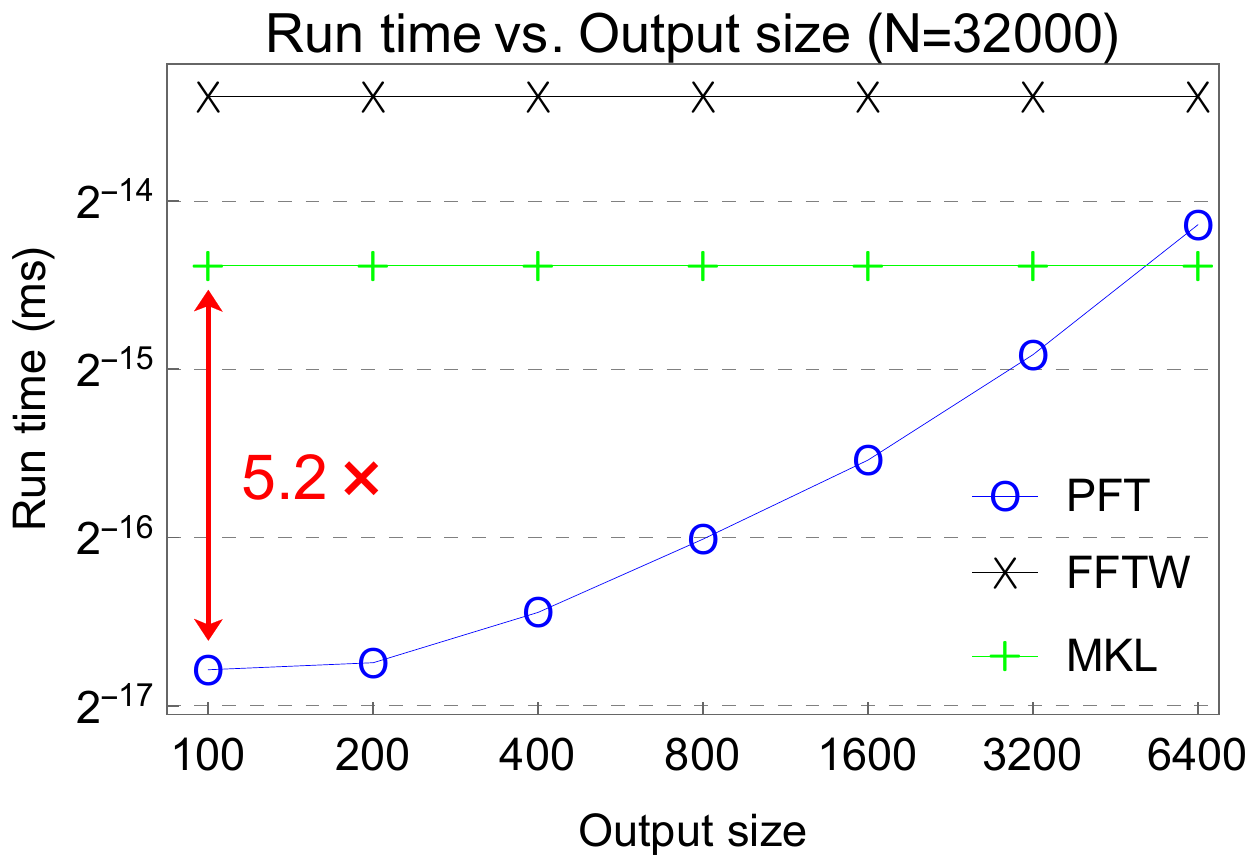}
		\label{exp_urban}
	}
	\centering
	\subfigure[Air Condition]{
		\includegraphics[width=0.48\linewidth]{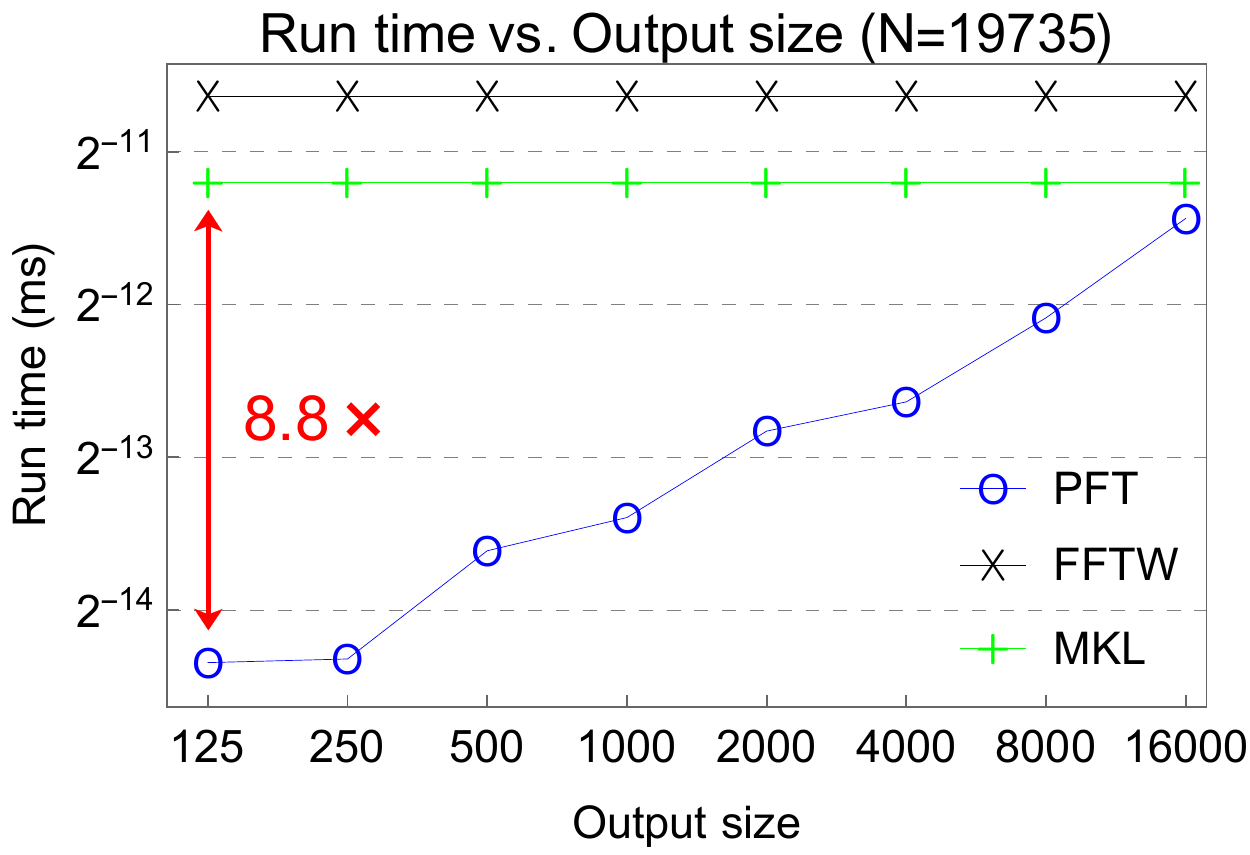}
		\label{exp_air}
	}
	\caption{
			Run time vs. output size results for \textbf{(a) Urban Sound} dataset and \textbf{(b) Air Condition} dataset.
			\method outperforms both FFTW and MKL regardless of the fact that
			the input size is not a power of 2 ($N=2^8 \times 5^3$)
			or not even a highly composite number ($N=5 \times 3947$).
	}
\end{figure}

\begin{itemize}
		\item \textbf{Urban Sound}
		dataset contains various sound recording vectors of size $N=32000=2^8 \times 5^3$.
		We evaluate the run time of \method vs. output sizes: $100$, $200$, $400$, $800$, $1600$, $3200$, and $6400$.
		Figure \ref{exp_urban} illustrates the average run times of the three competitive algorithms.
		We see that \method outperforms both FFTW and MKL if the output size is small enough compared to the input size.

		\item \textbf{Air Condition}
		dataset is composed of time series vectors of size $N=19735=5 \times 3947$.
		Note that $N$ has only two non-trivial divisors, namely $5$ and $3947$,
		forcing one to choose $p=3947$ in any practical settings; 
			if we choose $p=5$, the value $M/p$ often turns out to be \textit{too large}, 
			which results in a poor performance.
		We evaluate the run time of \method vs. output sizes: $125$, $250$, $500$, $1000$, $2000$, $4000$, $8000$, and $16000$,
		as shown in Figure \ref{exp_air}.
		It is noteworthy that \method still outperforms its competitors even in such pathological examples,
		which implies the robustness of our algorithm for various real-world situations.
\end{itemize}

\subsection{Effect of hyper-parameter $p$}
\label{subsec:hyperparam}

To investigate the effect of different choices of $p$, we fix $N=2^{22}$ and
vary the ratio $M/p$ from $1/32$ to $4$ for different target ranges: $R_{0,2^9}, R_{0,2^{10}}, \cdots, R_{0,2^{18}}$.
Table \ref{tab_p} shows the resulting run time for each setting,
where the bold highlights the best choice of $M/p$ for each $M$,
and the missing entries are due to worse performance than the FFT.
One crucial observation is as follows: with the increase of output size, the best choice of the ratio $M/p$ also increases or,
equivalently, the optimal value of $p$ tends to remain stable.
Intuitively, this is the consequence of ``balancing'' the three summation steps (Section \ref{Sec_sum}):
when $M\ll N$, the most computationally expensive operation is the matrix multiplication
with $O(rN)$ time complexity, and thus,
$M/p$ should be small so that the number $r$ of approximating terms decreases,
despite the sacrifices in the batch FFT step requiring $O(rp \log p)$ operations (Appendix \ref{proof_time_com}).
As the $M$ becomes larger, however, more concern is needed regarding the batch FFT and post-processing steps,
so the parameter $p$ should not change rapidly.
This observation, even though we do not provide an explicit formulation,
indicates the possibility that the optimal value of $p$ can be algorithmically auto-selected
given a setting $(N,M,\mu,\epsilon)$, which we leave as a future work.

\begin{table*}[t]
	\centering
	\caption{Average run time (ms) of \method for $N=2^{22}$ with different settings of $M/p$ and $M$.}
	\begin{tabular}{c c c c c c c c c c c}
	\toprule
	\multirow{2}[4]{*}{$M/p$} & \multicolumn{10}{c}{$M$}\\
	\cmidrule(rl){2-11}
	& $2^{9}$ & $2^{10}$ & $2^{11}$ & $2^{12}$ & $2^{13}$ & $2^{14}$ & $2^{15}$ & $2^{16}$ & $2^{17}$ & $2^{18}$ \\
	\cmidrule(r){1-1}\cmidrule(l){2-11}
	\multicolumn{1}{l}{1/32} & \textbf{1.273} & \textbf{1.394} & 1.634 & 2.303 & 5.659 & 14.121 & - & - & - & - \\
	\multicolumn{1}{l}{1/8} & 2.674 & 1.608 & \textbf{1.332} & \textbf{1.491} & 1.860 & 3.020 & 7.711 & - & - & - \\
	\multicolumn{1}{l}{1/2} & 2.627 & 3.738 & 2.717 & 1.678 & \textbf{1.526} & 1.881 & 2.707 & 5.740 & 14.715 & - \\
	\multicolumn{1}{l}{1} & 2.677 & 2.685 & 3.805 & 2.808 & 1.687 & \textbf{1.692} & 2.164 & 3.530 & 7.749 & - \\
	\multicolumn{1}{l}{2} & 4.005 & 2.723 & 2.731 & 3.533 & 2.878 & 1.949 & \textbf{1.940} & \textbf{2.821} & 5.556 & 12.534 \\
	\multicolumn{1}{l}{4} & 4.090 & 4.295 & 2.986 & 2.983 & 4.108 & 3.275 & 2.365 & 2.929 & \textbf{5.411} & \textbf{11.924} \\
	\bottomrule
	\end{tabular}
	\label{tab_p}
\end{table*}

\subsection{Anomaly detection}
\label{subsec:anomaly}

We demonstrate an example of how \method is applied to practical applications.
Here is one simple but fundamental principle:
\textit{replace the ``perform FFT and discard unused coefficients'' procedure with ``just perform \method''.}
Considering the anomaly detection method proposed in \cite{RasheedPAR09},
where one first performs FFT and then inverse FFT with only a few low-frequency coefficients to obtain an estimated fitted curve,
we can directly apply the principle to the method.
To verify this experimentally, we use a time series vector from Air Condition dataset,
and set the target range as $R_{0,125}$ ($\simeq 250$ low-frequency coefficients).
Note that, in this setting, \method results in around $8\times$ speedup compared to the conventional FFT (see Figure \ref{exp_air}).
The top-20 anomalous points detected from the data are presented in Figure \ref{anomaly}.
In particular, we found that replacing FFT with \method does not change the result of top-20 anomaly detection,
with all its computational benefits.

\begin{figure}[t!]
\centering
\includegraphics[width=0.99\linewidth]{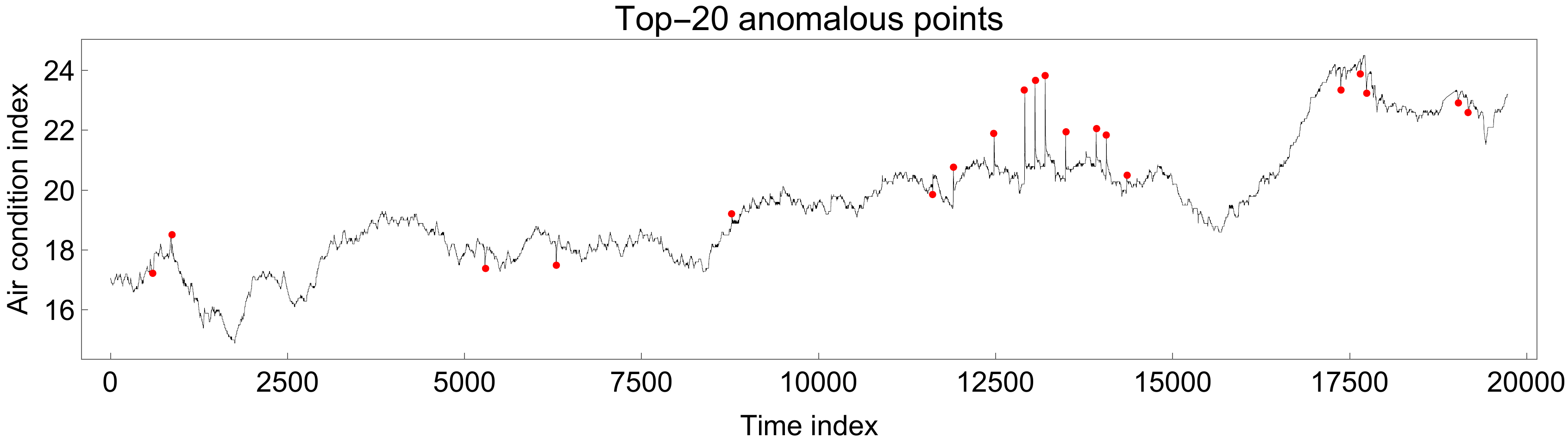}
\caption{
	Top-20 anomalous points detected in Air Condition time-series data, where each red dot denotes a detected anomaly position.
	Note that replacing FFT with \method does not change the result of the detection,
	still reducing the overall time complexity.
}
\label{anomaly}
\end{figure}


\section{Conclusions}
    \label{sec:conclusions}
    
In this paper, we propose \method (fast Partial Fourier Transform),
an efficient algorithm for computing a specified part of Fourier coefficients.
\method approximates some of twiddle factors with relatively small oscillations using polynomial functions,
reducing the computational complexity of DFT due to the mixture of many twiddle factors.
Experimental results show that our algorithm outperforms the state-of-the-art FFT algorithms, FFTW and MKL,
with an order of magnitude of speedup for sufficiently small output sizes without sacrificing accuracy.
Future works include optimizing the implementation of \method;
for example, the optimal divisor $p$ of input size $N$ might can be algorithmically auto-selected.
We also believe that hardware-specific optimizations (similar to FFTW or MKL) would further increase the performance of \method.

\bibliography{bib/other}

\begin{thebibliography}{18}
\providecommand{\natexlab}[1]{#1}
\providecommand{\url}[1]{\texttt{#1}}
\expandafter\ifx\csname urlstyle\endcsname\relax
  \providecommand{\doi}[1]{doi: #1}\else
  \providecommand{\doi}{doi: \begingroup \urlstyle{rm}\Url}\fi

\bibitem[Cooley \& Tukey(1965)Cooley and Tukey]{cooley}
James~W Cooley and John~W Tukey.
\newblock An algorithm for the machine calculation of complex fourier series.
\newblock \emph{Mathematics of computation}, 19\penalty0 (90):\penalty0
  297--301, 1965.

\bibitem[Fraser(1965)]{Fraser65}
W.~Fraser.
\newblock A survey of methods of computing minimax and near-minimax polynomial
  approximations for functions of a single independent variable.
\newblock \emph{J. {ACM}}, 12\penalty0 (3):\penalty0 295--314, 1965.
\newblock \doi{10.1145/321281.321282}.
\newblock URL \url{https://doi.org/10.1145/321281.321282}.

\bibitem[{Frigo} \& {Johnson}(2005){Frigo} and {Johnson}]{FFTW}
M.~{Frigo} and S.~G. {Johnson}.
\newblock The design and implementation of fftw3.
\newblock \emph{Proceedings of the IEEE}, 93\penalty0 (2):\penalty0 216--231,
  2005.

\bibitem[Hassanieh et~al.(2012)Hassanieh, Indyk, Katabi, and Price]{SFFT}
Haitham Hassanieh, Piotr Indyk, Dina Katabi, and Eric Price.
\newblock Simple and practical algorithm for sparse fourier transform.
\newblock In \emph{Proceedings of the twenty-third annual ACM-SIAM symposium on
  Discrete Algorithms}, pp.\  1183--1194. SIAM, 2012.

\bibitem[Hou \& Zhang(2007)Hou and Zhang]{HouZ07}
Xiaodi Hou and Liqing Zhang.
\newblock Saliency detection: {A} spectral residual approach.
\newblock In \emph{CVPR}. {IEEE} Computer Society, 2007.

\bibitem[Iwen et~al.(2007)Iwen, Gilbert, Strauss, et~al.]{AAFFT}
MA~Iwen, A~Gilbert, M~Strauss, et~al.
\newblock Empirical evaluation of a sub-linear time sparse dft algorithm.
\newblock \emph{Communications in Mathematical Sciences}, 5\penalty0
  (4):\penalty0 981--998, 2007.

\bibitem[Jin et~al.(2020)Jin, Qin, Fang, Du, Zhang, Yu, Zhang, and
  Smola]{abs-2007-00216}
Jiarui Jin, Jiarui Qin, Yuchen Fang, Kounianhua Du, Weinan Zhang, Yong Yu,
  Zheng Zhang, and Alexander~J. Smola.
\newblock An efficient neighborhood-based interaction model for recommendation
  on heterogeneous graph.
\newblock \emph{CoRR}, abs/2007.00216, 2020.

\bibitem[Johnson \& Frigo(2006)Johnson and Frigo]{johnson2006modified}
Steven~G Johnson and Matteo Frigo.
\newblock A modified split-radix fft with fewer arithmetic operations.
\newblock \emph{IEEE Transactions on Signal Processing}, 55\penalty0
  (1):\penalty0 111--119, 2006.

\bibitem[Malik \& Becker(2018)Malik and Becker]{MalikB18}
Osman~Asif Malik and Stephen Becker.
\newblock Low-rank tucker decomposition of large tensors using tensorsketch.
\newblock In \emph{NeurIPS}, pp.\  10117--10127, 2018.

\bibitem[Mathieu et~al.(2014)Mathieu, Henaff, and LeCun]{MathieuHL13}
Micha{\"{e}}l Mathieu, Mikael Henaff, and Yann LeCun.
\newblock Fast training of convolutional networks through ffts.
\newblock In \emph{2nd International Conference on Learning Representations,
  {ICLR} 2014, Banff, AB, Canada, April 14-16, 2014, Conference Track
  Proceedings}, 2014.

\bibitem[Mueen et~al.(2010)Mueen, Nath, and Liu]{MueenNL10}
Abdullah Mueen, Suman Nath, and Jie Liu.
\newblock Fast approximate correlation for massive time-series data.
\newblock In \emph{SIGMOD}, pp.\  171--182. {ACM}, 2010.

\bibitem[Pagh(2013)]{Pagh13}
Rasmus Pagh.
\newblock Compressed matrix multiplication.
\newblock \emph{{ACM} Trans. Comput. Theory}, 5\penalty0 (3):\penalty0
  9:1--9:17, 2013.

\bibitem[Pham \& Pagh(2013)Pham and Pagh]{PhamP13}
Ninh Pham and Rasmus Pagh.
\newblock Fast and scalable polynomial kernels via explicit feature maps.
\newblock In \emph{KDD}, pp.\  239--247. {ACM}, 2013.

\bibitem[Rasheed et~al.(2009)Rasheed, Peng, Alhajj, and Rokne]{RasheedPAR09}
Faraz Rasheed, Peter Peng, Reda Alhajj, and Jon~G. Rokne.
\newblock Fourier transform based spatial outlier mining.
\newblock In \emph{Intelligent Data Engineering and Automated Learning -
  {IDEAL} 2009, 10th International Conference, Burgos, Spain, September 23-26,
  2009. Proceedings}, volume 5788 of \emph{Lecture Notes in Computer Science},
  pp.\  317--324. Springer, 2009.

\bibitem[Ren et~al.(2019)Ren, Xu, Wang, Yi, Huang, Kou, Xing, Yang, Tong, and
  Zhang]{RenXWYHKXYTZ19}
Hansheng Ren, Bixiong Xu, Yujing Wang, Chao Yi, Congrui Huang, Xiaoyu Kou, Tony
  Xing, Mao Yang, Jie Tong, and Qi~Zhang.
\newblock Time-series anomaly detection service at microsoft.
\newblock In \emph{KDD}, pp.\  3009--3017. {ACM}, 2019.

\bibitem[Rippel et~al.(2015)Rippel, Snoek, and Adams]{rippel2015spectral}
Oren Rippel, Jasper Snoek, and Ryan~P Adams.
\newblock Spectral representations for convolutional neural networks.
\newblock In \emph{Advances in neural information processing systems}, pp.\
  2449--2457, 2015.

\bibitem[Shi et~al.(2017)Shi, Yang, and You]{ShiYY17}
Sheng Shi, Runkai Yang, and Haihang You.
\newblock A new two-dimensional fourier transform algorithm based on image
  sparsity.
\newblock In \emph{2017 {IEEE} International Conference on Acoustics, Speech
  and Signal Processing, {ICASSP} 2017, New Orleans, LA, USA, March 5-9, 2017},
  pp.\  1373--1377. {IEEE}, 2017.

\bibitem[Smirnov \& Smirnov(1999)Smirnov and Smirnov]{smirnov1999best}
Georgey~S Smirnov and Roman~G Smirnov.
\newblock Best uniform approximation of complex-valued functions by generalized
  polynomials having restricted ranges.
\newblock \emph{Journal of approximation theory}, 100\penalty0 (2):\penalty0
  284--303, 1999.

\end{thebibliography}
\bibliographystyle{iclr2020_conference}

\newpage
\appendix
\section{Proofs}
\subsection{Proof of Lemma \ref{lem_translation}} \label{proof_translation}
\begin{proof}
Recall that the polynomial $\mathcal{P}_{\alpha, \xi, u}$ is defined by 
$\operatorname*{arg\,min}_{P \in P_{\alpha}} \| P(x)-e^{uix}\|_{|x|\leq |\xi|}$.
If $P(x) \in P_{\alpha}$, it is clear that $e^{ui\mu} \cdot P(x-\mu) \in P_{\alpha}$,
because translation and non-zero scalar multiplication on a polynomial do not change its degree.
Thus, we may re-express the definition of $\mathcal{P}_{\alpha, \xi, u}$ as follows:
\begin{align*}
e^{ui\mu} \cdot \mathcal{P}_{\alpha, \xi, u}(x-\mu) 
&= \operatorname*{arg\,min}_{e^{ui\mu} \cdot P(x-\mu) \in P_{\alpha}} \| P(x)-e^{uix}\|_{|x|\leq |\xi|} \\
&= \operatorname*{arg\,min}_{P \in P_{\alpha}} \| e^{-ui\mu} P(x+\mu)-e^{uix}\|_{|x|\leq |\xi|} \\
&= \operatorname*{arg\,min}_{P \in P_{\alpha}} \| P(x+\mu)-e^{ui(x+\mu)}\|_{|x|\leq |\xi|} \\
&= \operatorname*{arg\,min}_{P \in P_{\alpha}} \| P(x)-e^{uix}\|_{|x-\mu|\leq |\xi|},
\end{align*}
where the third equality holds since $| e^{ui\mu}|=1$, and hence the proof.
\end{proof}

\subsection{Proof of Lemma \ref{lem_smooth}} \label{proof_smooth}
\begin{proof}
	Suppose that none of $N$'s divisors belongs to $[M/\sqrt{b}, \sqrt{b} M)$.
	Let $1=p_1<p_2<\cdots<p_d=N$ be the enumeration of all positive divisors of $N$ in increasing order.
	It is clear that $p_1<\sqrt{b} M$ and $M/\sqrt{b}< p_d$ since $b\geq 2$ and $1\leq M\leq N$.
	Then, there exists an $i \in \{1,2,\cdots,d-1 \}$ so that $p_i<M/\sqrt{b}$ and $p_{i+1} \geq \sqrt{b} M$.
	Since $N$ is $b$-smooth and $p_i<N$, at least one of $2p_i, 3p_i, \cdots, bp_i$ must be a divisor of $N$.
	However, this is a contradiction because we have
	$p_{i+1}/p_{i} > (\sqrt{b}M)(M/\sqrt{b})^{-1} = b$,
	so none of $2p_i, 3p_i, \cdots, bp_i$ can be a divisor of $N$, which completes the proof.
\end{proof}

\subsection{Proof of Theorem \ref{thm_time_com}} \label{proof_time_com}
\begin{proof}
	Following the convention in counting FFT operations,
	we assume that all data-independent elements such as configuration results $B, p, q, r$ and twiddle factors are precomputed,
	and thus not included in the run-time cost.
	We begin with construction of the matrix $A$.
	For this, we merely interpret $\bm{a}$ as an array representation for $A$ of size $p\times q=N$.
	Also, recall that the matrix $B$ can be precomputed as described in Section \ref{Sec_sum}.
	For the two matrices $A$ of size $p \times q$ and $B$ of size $q \times r$,
	standard matrix multiplication algorithm has running time of $O(pqr) = O(r \cdot N)$.
	Next, the expression (\ref{FFTs}) contains $r$ DFTs of size $p$. We use FFT multiple times for the computation,
	then it is easy to see that the time cost is given by $O(r\cdot p\log p)$.
	Finally, there are $2M+1$ coefficients to be calculated in (\ref{IPs}),
	each requiring $O(r)$ operations, giving an upper bound $O(r\cdot M)$ for the running time.
	Combining the three upper bounds, we formally express the time complexity $T(N,M)$,
	\begin{align} \label{TNM1}
	T(N,M) = O(r \cdot (N+p\log p +M)).
	\end{align}
	Note that $r$ is only dependent of $\epsilon$ and $M/p$ by its definition. Therefore,
	when $\epsilon$ is fixed, $T(N,M)$ is dependent of the choice of $p$.
	By Lemma \ref{lem_smooth}, we can always find a $p=\Theta(M)$.
	In this case, $M/p$ is bounded, and thus, so is $r$.
	Then, from (\ref{TNM1}), we obtain the following asymptotic upper bound with respect to $N$ and $M$:
	\begin{align*} 
	T(N,M) = O(N+M\log M),
	\end{align*}
	hence the proof.
\end{proof}

\subsection{Proof of Theorem \ref{thm_err_bd}} \label{proof_err_bd}
\begin{proof}
	Let $v=-2 (l- q/2 )/N$.
	By the estimation in (\ref{approx}), the following holds:
	\begin{align*}
	\|\hat{\bm{a}}-\mathcal{E}(\hat{\bm{a}})\|_{R_{\mu,M}}
	&= \| \sum_{k, l}{a_{kl} \big( e^{\pi ivm}- e^{\pi iv\mu} \cdot \mathcal{P}_{r-1, \xi(\epsilon,r), \pi}(v(m-\mu)) \big)e^{-2\pi imk/p} e^{-\pi im/p}}\|_{R_{\mu,M}} \\
	&\leq \sum_{k, l} \|{a_{kl} \big( e^{\pi ivm}-e^{\pi iv\mu} \cdot \mathcal{P}_{r-1, \xi(\epsilon,r), \pi}(v(m-\mu)) \big)e^{-2\pi imk/p} e^{-\pi im/p}}\|_{R_{\mu,M}} \\
	&= \sum_{k, l} |a_{kl}| \cdot \| e^{\pi iv(m-\mu)}-\mathcal{P}_{r-1, \xi(\epsilon,r), \pi}(v(m-\mu)) \|_{R_{\mu,M}},
	\end{align*}
	since $e^{-2\pi imk/p}$ and $e^{-\pi im/p}$ are unit normed functions, and $|e^{\pi iv\mu}|=1$.
	If $l$ ranges from $0$ to $q-1$, then $|v| \leq 2 (q/2) /N=1/p$, and thus, $M|v| \leq M/p \leq \xi(\epsilon,r)$.
	We extend the domain of the RHS from $m \in [\mu-M,\mu+M] \cap \mathbb{Z}$ to $x \in [\mu-M,\mu+M]$
	(note that extending domain never decreases the uniform norm),
	and substitute $v(x-\mu)$ with $x'$, from which it follows that
	\begin{align*}
	\|\hat{\bm{a}}-\mathcal{E}(\hat{\bm{a}})\|_{R_{\mu,M}}
	&\leq \sum_{k, l} |a_{kl}| \cdot \| e^{\pi iv(x-\mu)}-\mathcal{P}_{r-1, \xi(\epsilon,r), \pi}(v(x-\mu)) \|_{|x-\mu| \leq M}  \\
	&= \sum_{k, l} |a_{kl}| \cdot \|e^{\pi ix'}-\mathcal{P}_{r-1, \xi(\epsilon,r), \pi}(x')  \|_{|x'| \leq M|v|} \\
	&\leq \sum_{k, l} |a_{kl}| \cdot \|e^{\pi ix'}-\mathcal{P}_{r-1, \xi(\epsilon,r), \pi}(x')  \|_{|x'| \leq \xi(\epsilon,r)} \\
	&\leq \sum_{k, l} |a_{kl}| \cdot \epsilon \\
	&=\| \bm{a} \| _{\rm1} \cdot \epsilon,
	\end{align*}
	where the second inequality holds since $M|v| \leq \xi(\epsilon,r)$, hence the desired result.
\end{proof}

\section{Two-Dimensional PFT} \label{2d_PFT}
Arguing similarly as the 1-d (dimensional) \method, 
we present an algorithm to compute a part of coefficients of 2-d  DFT
which is defined as follows:
\begin{align} \label{2DFT} 
\hat{a}_{m_1,m_2}=\sum_{(n_1, n_2) \in [N_1]\times[N_2]} a_{n_1,n_2} e^{-2\pi im_1n_1/N_1} e^{-2\pi im_2n_2/N_2},
\end{align}
where $\bm{a}$ is a 2-d complex-valued array of size $N_1 \times N_2$. Let $M_1\leq N_1/2$ and $M_2\leq N_2/2$ be non-negative integers and $(\mu_1, \mu_2) \in \mathbb{Z}^2$.
Our goal is to compute the Fourier coefficients $\hat{a}_{m_1, m_2}$ for $(m_1, m_2)$ belonging to the rectangle,
\[
R_{(\mu_1, \mu_2), (M_1, M_2)} := [\mu_1-M_1, \mu_1+M_1] \times [\mu_2-M_2, \mu_2+M_2] \ \cap \ \mathbb{Z}^2,
\]
for which we use the same terminology ``target range''.
Let $N_1=p_1q_1$ and $N_2=p_2q_2$ be composite integers, where $p_1, p_2, q_1, q_2 > 1$.
The same argument presented in Section \ref{Sec_twiddle_small} gives
\begin{align*}
\hat{a}_{m_1,m_2} =\sum_{k_1, k_2, l_1, l_2}
&a_{q_1k_1+l_1, q_2k_2+l_2} \prod_d e^{-2\pi im_d(l_d- q_d/2 )/N_d} \ e^{-2\pi im_d k_d/p_d}\ e^{-\pi im_d  /p_d} ,
\end{align*}
where $k_1 \in [p_1], k_2 \in [p_2], l_1 \in [q_1], l_2 \in [q_2]$, and $d=1,2$. 
We find the minimum $r_d$ satisfying $\xi(\epsilon,r_d) \geq M_d/p_d$ for each $d$.
Estimating $e^{-2\pi im_d(l_d- q_d/2 )/N_d}$ by $\mathcal{P}_{r_d-1, \xi(\epsilon, r_d), \pi}$ yields
\begin{equation} \label{approx_2}
\begin{split}
\hat{a}_{m_1, m_2} \approx \sum_{j_1, j_2, k_1, k_2, l_1, l_2} a^{(k_1,k_2)}_{l_1 l_2} b^{(1)}_{l_1 j_1}  b^{(2)}_{l_2 j_2} 
	\prod_d e^{-2\pi im_d k_d/p_d} ((m_d-\mu_d)/p_d)^{j_d} \ e^{-\pi im_d /p_d} ,
\end{split}
\end{equation}
where $j_1 \in [r_1], j_2 \in [r_2]$, and
\begin{align*}
&A^{(k_1,k_2)} = (a^{(k_1,k_2)}_{l_1 l_2}) = a_{q_1k_1+l_1, q_2k_2+l_2}, \\
&B^{(d)} = (b^{(d)}_{l_d j_d}) = e^{-2\pi i \mu_d(l_d- q_d/2)/N_d} \ w_{\epsilon,r_d-1, j_d} \ (1-2l_d/q_d)^{j_d}, \quad d=1, 2.
\end{align*}
In (\ref{approx_2}), the summation $\sum_{l_1,l_2} a^{(k_1,k_2)}_{l_1 l_2} b^{(1)}_{l_1 j_1}  b^{(2)}_{l_2 j_2}$
can be written as matrix multiplications,
\begin{align} \label{mm2}
B^{(1) T} \times A^{(k_1, k_2)} \times B^{(2)}.
\end{align}
We denote the result matrix as $C^{(k_1, k_2)} = (c^{(k_1,k_2)}_{j_1  j_2})$.
Next, note that for each $(j_1, j_2) \in [r_1]\times[r_2]$, the operation
$\sum_{k_1,k_2} c^{(k_1,k_2)}_{j_1  j_2} \prod_d e^{-2\pi im_d k_d/p_d}$ is a 2-d DFT of size $p_1 \times p_2$.
Let $\hat{c}^{(j_1,j_2)}_{m_1,m_2}$ be the Fourier coefficients of $c^{(k_1,k_2)}_{j_1  j_2}$ with respect to $(k_1, k_2)$.
Then, we obtain the following estimation of $\hat{a}_{m_1,m_2}$ for $(m_1,m_2) \in R_{(\mu_1, \mu_2), (M_1, M_2)}$:
\begin{equation} \label{ip_2a}
\begin{split}
\hat{a}_{m_1, m_2} \approx \sum_{j_1, j_2}
\hat{c}^{(j_1,j_2)}_{m_1,m_2} \prod_d  ((m_d-\mu_d)/p_d)^{j_d} \ e^{-\pi im_d /p_d}.
\end{split}
\end{equation}
The full computation is outlined in Algorithm \ref{algo_2config} and Algorithm \ref{algo_2pft}.

\IncMargin{1em}
\begin{algorithm}[h] \label{algo_2config} %
	\SetKwFunction{FFT}{FFT}
	\SetKwInOut{Input}{input}\SetKwInOut{Output}{output}
	\Input{Input size $(N_1,N_2)$, output descriptors $(M_1,M_2)$ and $(\mu_1,\mu_2)$, divisors $(p_1,p_2)$, 
		and tolerance $\epsilon$}
	\Output{Configuration results $B^{(1)}, B^{(2)}, p_1,p_2, q_1,q_2, r_1,r_2$}
	\BlankLine
	\For{ $d=1,2$ }{
		$q_d \leftarrow N_d/p_d$ \\
		$r_d \leftarrow \min \{ r\in\mathbb{N}: \xi(\epsilon,r) \geq {M_d}/{p_d} \}$ \\
		\For{ $l \in [q_d], j \in [r_d]$}{
			$B^{(d)}[l,j] \leftarrow e^{-2\pi i \mu_d(l- q_d/2)/N_d} \cdot w_{\epsilon,r_d-1, j} \cdot (1-2l/q_d)^{j}$
		}	
	}
	\caption{Configuration phase of 2-dimensional \method}
\end{algorithm}
\DecMargin{1em}

\IncMargin{1em}
\begin{algorithm}[h] \label{algo_2pft} %
	\SetKwFunction{FFT}{FFT}
	\SetKwInOut{Input}{input}\SetKwInOut{Output}{output}
	\Input{2-d array $\bm{a}$ of size $N_1 \times N_2$, output descriptors $(M_1,M_2)$ and $(\mu_1,\mu_2)$,
		and configuration results $B^{(1)}, B^{(2)}, p_1,p_2, q_1,q_2, r_1,r_2$}
	\Output{2-d array $\mathcal{E}(\hat{\bm{a}})$ of estimated Fourier coefficients of $\bm{a}$ for $R_{(\mu_1, \mu_2), (M_1, M_2)}$}
	\BlankLine
	$A^{(k_1,k_2)}[l_1,l_2] \leftarrow a_{q_1k_1+l_1, q_2k_2+l_2}$ for $k_1 \in [p_1], k_2 \in [p_2], l_1 \in [q_1], l_2 \in [q_2]$ \\
	\For{ $(k_1,k_2) \in [p_1] \times [p_2]$ }{
		$C^{(k_1,k_2)} \leftarrow B^{(1)T} \times A^{(k_1,k_2)} \times B^{(2)} $
	} 
	\For{ $(j_1,j_2) \in [r_1] \times [r_2]$ }{
		$\hat{C}^{(j_1,j_2)}[m_1, m_2] \leftarrow \FFT(C^{(k_1,k_2)}[j_1,j_2])$ with respect to $(k_1, k_2) \in [p_1]\times[p_2]$
	}
	\For{ $(m_1, m_2) \in [\mu_1-M_1,\mu_1+M_1] \times [\mu_2-M_2, \mu_2+M_2]$ }{
		$\mathcal{E}(\hat{\bm{a}})[m_1, m_2] \leftarrow \sum_{j_1 \in [r_1], j_2 \in [r_2]} \hat{C}^{(j_1, j_2)}[m_1\%p_1, m_2\%p_2] 
	 \prod_{d=1,2} ((m_d-\mu_d)/p_d)^{j_d} \ e^{-\pi im_d  /p_d}$ \\
	}
	\caption{Computation phase of 2-dimensional \method}
\end{algorithm}
\DecMargin{1em}

The analysis of 2-d \method is also analogous to the 1-d case.
As in Section \ref{Sec_time}, for a given setting $(N_1, N_2, M_1, M_2, \mu_1, \mu_2, p_1, p_2, \epsilon)$,
we assume that all data-independent constants such as
$B^{(1)}, B^{(2)}$, and any twiddle factors are precomputed.
We shall use the following notations:
\[
N=N_1N_2, \quad M=M_1M_2, \quad p=p_1p_2, \quad q=q_1q_2, \quad r=r_1r_2.
\]
The estimation (\ref{approx_2}) involves matrix multiplications (\ref{mm2}) for each $(k_1, k_2) \in [p_1] \times [p_2]$.
Note that (\ref{mm2}) has two parenthesizations,
namely $(B^{(1)T} A^{(k_1, k_2)}) B^{(2)}$ and $B^{(1)T} (A^{(k_1, k_2)} B^{(2)})$,
each requiring $O(q_2 r_1 (q_1+r_2))$ and $O(q_1 r_2 (q_2+r_1))$ operations, respectively,
which allows one to choose the parenthesization with lower cost.
Without loss of generality, we may assume that the former requires fewer operations.
Then, the total cost of computing $C^{(k_1,k_2)}$ for all $(k_1, k_2)$ is given by $O(p_1 p_2 q_2 r_1(q_1+r_2))=O(r\cdot N)$
since
\[
p_1 p_2 q_2 r_1(q_1+r_2) = \frac{r_1r_2N_1N_2}{r_2}+ \frac{r_1r_2N_1N_2}{q_1} = (\frac{1}{r_2}+\frac{1}{q_1})rN < 2rN.
\]
We next perform $r_1 r_2$ 2-d FFTs of size $p_1 \times p_2$ to calculate $\hat{\bm{c}}^{(j_1, j_2)}$, which takes $O(r \cdot p \log p)$ time.
The remaining computation (\ref{ip_2a}) requires $O(r)$ operations for each $m$, giving an $O(r \cdot M)$ running time.
The time cost $T(N, M)$ of 2-d \method, therefore, can be written as
\begin{align} \label{TNM21} 
T(N,M) = O(r \cdot (N+p\log p +M)).
\end{align}
This is exactly the same form as in the 1-dimensional analysis,
which leads to the following analogy to Theorem \ref{thm_time_com} presented in Section \ref{Sec_time}.

\begin{theorem}
\label{th_smooth2}
Fix a tolerance $\epsilon>0$ and two integers $b_1, b_2\geq 2$.
If $N_1$ is $b_1$-smooth and $N_2$ is $b_2$-smooth, 
then the time complexity $T(N,M)$ of two-dimensional \method has an asymptotic upper bound $O(N+M\log M)$.
\end{theorem}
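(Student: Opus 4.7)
The plan is to mirror the proof of Theorem \ref{thm_time_com} dimension-wise, leveraging the fact that the running time formula (\ref{TNM21}) has exactly the same shape as in the one-dimensional case. The key observation is that $r = r_1 r_2$ depends only on $\epsilon$ and on the ratios $M_d/p_d$ (since each $r_d$ is chosen as the minimum integer satisfying $\xi(\epsilon, r_d) \geq M_d/p_d$), so the job reduces to showing that each $M_d/p_d$ can be made bounded by an absolute constant through a judicious choice of divisors $p_d$ of $N_d$.

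First I would apply Lemma \ref{lem_smooth} independently to each dimension. Since $N_d$ is $b_d$-smooth and $M_d \leq N_d$, the lemma furnishes a divisor $p_d$ of $N_d$ with $M_d/\sqrt{b_d} \leq p_d < \sqrt{b_d}\, M_d$. In particular, $M_d/p_d \leq \sqrt{b_d}$ is bounded by a constant depending only on $b_d$, so for fixed $\epsilon$ each $r_d$ is bounded, and hence $r = r_1 r_2 = O(1)$. Moreover, $p = p_1 p_2 = \Theta(M_1 M_2) = \Theta(M)$.

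Plugging these estimates into (\ref{TNM21}) gives the three individual bounds
\begin{align*}
r \cdot N &= O(N), \\
r \cdot p \log p &= O(M \log M), \\
r \cdot M &= O(M).
\end{align*}
Summing these bounds yields $T(N,M) = O(N + M \log M)$, completing the proof.

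There is really no genuine obstacle here beyond bookkeeping: the 2-d summation count in (\ref{TNM21}) is structurally identical to the 1-d expression (\ref{TNM1}), and Lemma \ref{lem_smooth} applies to each factor of $N = N_1 N_2$ separately. The one place to tread carefully is in justifying that the matrix-multiplication cost $O(r\cdot N)$ used to derive (\ref{TNM21}) remains valid under the parenthesization choice; this was already handled in the preceding discussion of (\ref{mm2}) and does not need to be revisited.
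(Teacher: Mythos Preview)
Your proposal is correct and follows essentially the same approach as the paper's own proof: apply Lemma~\ref{lem_smooth} in each dimension to obtain $p_d=\Theta(M_d)$, conclude that $r=r_1r_2$ is bounded and $p=\Theta(M)$, and then read off the bound from (\ref{TNM21}). The paper's version is terser, but the logical content is identical.
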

\begin{proof}
By Lemma \ref{lem_smooth}, we can always find $p_1|N_1$ and $p_2|N_2$ such that $p_1 = \Theta(M_1)$ and $p_2 = \Theta(M_2)$,
giving a tight bound for $r=r_1r_2$. Since  $p=p_1p_2=\Theta(M_1M_2)=\Theta(M)$, we obtain the desired upper bound from (\ref{TNM21}).
\end{proof}

Finally, the following theorem gives an approximation bound of 2-d \method.
\begin{theorem} \label{thm_err_bd2}
	Given $\epsilon>0$, the estimated Fourier coefficient $\mathcal{E}(\hat{\bm{a}})$ in (\ref{approx_2}) satisfies
	\begin{align*}
	\|\hat{\bm{a}}-\mathcal{E}(\hat{\bm{a}})\|_{R_{(\mu_1,\mu_2),(M_1,M_2)}} \leq \|\bm{a}\|_1 \cdot (\epsilon^2+2\epsilon),
	\end{align*}
	where $\| \cdot \|_R$ denotes the uniform norm restricted to $R \subseteq \mathbb{R}^2$.
\end{theorem}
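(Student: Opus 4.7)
The plan is to mirror the 1-d proof of Theorem \ref{thm_err_bd}, but with careful bookkeeping for the fact that two separate exponential factors (one per coordinate) are simultaneously replaced by their polynomial approximations. Starting from the exact expression analogous to (\ref{split_a}) in two dimensions and the estimate (\ref{approx_2}), I would write the difference $\hat{a}_{m_1,m_2} - \mathcal{E}(\hat{\bm{a}})_{m_1,m_2}$ as a single summation over $(k_1, k_2, l_1, l_2)$ in which each summand contains a factor of the form $E_1 E_2 - P_1 P_2$, where $E_d := e^{-2\pi i m_d (l_d - q_d/2)/N_d}$ and $P_d$ is the translated polynomial approximation supplied by Lemma \ref{lem_translation} along the $d$-th coordinate. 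All remaining factors in the summand, namely $a_{q_1 k_1 + l_1, q_2 k_2 + l_2}$, $e^{-2\pi i m_d k_d/p_d}$, and $e^{-\pi i m_d / p_d}$, contribute only $|a_{q_1 k_1 + l_1, q_2 k_2 + l_2}|$ in modulus, since the exponentials lie on the unit circle.

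The key algebraic step will be the two-term telescoping identity
\[
E_1 E_2 - P_1 P_2 = E_1 (E_2 - P_2) + P_2 (E_1 - P_1),
\]
to which the triangle inequality gives $|E_1 E_2 - P_1 P_2| \leq |E_1|\,|E_2 - P_2| + |P_2|\,|E_1 - P_1|$. With $v_d := -2(l_d - q_d/2)/N_d$, we have $M_d |v_d| \leq M_d / p_d \leq \xi(\epsilon, r_d)$ by the choice of $r_d$ in Algorithm \ref{algo_2config}, so Lemma \ref{lem_translation} together with Definition \ref{def_xi} yields $|E_d - P_d| \leq \epsilon$ uniformly over $m_d \in [\mu_d - M_d, \mu_d + M_d]$. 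Since $|E_d| = 1$, this also forces $|P_d| \leq 1 + \epsilon$, and consequently $|E_1 E_2 - P_1 P_2| \leq \epsilon + (1 + \epsilon)\epsilon = \epsilon^2 + 2\epsilon$.

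Combining these bounds under the outer triangle inequality and collecting unit-norm twiddle factors as in the 1-d proof (extending from integer $m$ to continuous $x$ to match the definition of the uniform norm) yields
\[
\|\hat{\bm{a}} - \mathcal{E}(\hat{\bm{a}})\|_{R_{(\mu_1,\mu_2),(M_1,M_2)}} \leq \sum_{k_1, k_2, l_1, l_2} |a_{q_1 k_1 + l_1, q_2 k_2 + l_2}| \cdot (\epsilon^2 + 2\epsilon) = \|\bm{a}\|_1 \cdot (\epsilon^2 + 2\epsilon),
\]
which is the claimed inequality. The main obstacle I anticipate is purely notational: keeping the four-index summation, the coordinate-wise translation lemma, and the domain-extension argument organized so that the cross-term $(1+\epsilon)\epsilon$ appears cleanly. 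Once that bookkeeping is in place, the constant $\epsilon^2 + 2\epsilon$ falls out essentially automatically, and no new analytic ingredient beyond the 1-d proof is needed.
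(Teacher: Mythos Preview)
Your proposal is correct and essentially identical to the paper's own proof: both reduce to a triangle inequality over the $(k_1,k_2,l_1,l_2)$ summation, strip off the unit-modulus twiddle factors, use the same bound $M_d|v_d|\le M_d/p_d\le \xi(\epsilon,r_d)$, and handle the bivariate product via a two-term telescoping of $E_1E_2-P_1P_2$ together with $|P_d|\le 1+\epsilon$ to land on $\epsilon+(1+\epsilon)\epsilon=\epsilon^2+2\epsilon$. The only cosmetic difference is which factor you peel off first in the telescoping (the paper writes $(E_1-P_1)E_2+P_1(E_2-P_2)$), which of course yields the same bound.
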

\begin{proof}
	Let $v_d=-2 (l_d- q_d/2 )/N_d$ for $d=1,2$, and $\mathcal{R}=R_{(\mu_1,\mu_2),(M_1,M_2)}$.
	Then, it follows that (all the summations are over indices $(k_1, k_2, l_1, l_2)$),
	\begin{align*}
		\|\hat{\bm{a}}-\mathcal{E}(\hat{\bm{a}})\|_{\mathcal{R}}
		&\leq \sum \| a^{(k_1,k_2)}_{l_1 l_2} \big( \prod_d e^{\pi iv_d m_d}  
		 - \prod_d e^{\pi iv_d\mu_d} \ \mathcal{P}_{r_d-1, \xi(\epsilon,r_d), \pi}(v_d(m_d-\mu_d))  \big) \|_{\mathcal{R}} \\
		&= \sum |a^{(k_1,k_2)}_{l_1 l_2}| \cdot \| \prod_d e^{\pi iv_d(m_d-\mu_d)}-\prod_d \mathcal{P}_{r_d-1, \xi(\epsilon,r_d), \pi}(v_d(m_d-\mu_d)) \|_{\mathcal{R}}.
	\end{align*}
	Since $l_d$ ranges from $0$ to $q_d-1$, we have $|v_d| \leq 2 (q_d/2) /N_d=1/p_d$, 
	and therefore $M_d|v_d| \leq M_d/p_d \leq \xi(\epsilon,r_d)$.
	We extend the domain of the RHS to $(x_1,x_2) \in \prod_d [\mu_d-M_d,\mu_d+M_d] $
	and substitute $v_d(x_d-\mu_d)$ with $x_d'$:
	\begin{align*}
		\|\hat{\bm{a}}-\mathcal{E}(\hat{\bm{a}})\|_{\mathcal{R}}
		&\leq \sum |a^{(k_1,k_2)}_{l_1 l_2}| \cdot \| \prod_d e^{\pi ix_d'}- \prod_d \mathcal{P}_{r_d-1, \xi(\epsilon,r_d), \pi}(x_d')  
		\|_{ |x_d'| \leq M_d|v_d|,\ \forall d } \\
		&\leq \sum |a^{(k_1,k_2)}_{l_1 l_2}| \cdot \| \prod_d e^{\pi ix_d'}- \prod_d \mathcal{P}_{r_d-1, \xi(\epsilon,r_d), \pi}(x_d')  
		\|_{ |x_d'| \leq \xi(\epsilon,r_d),\ \forall d } .
	\end{align*}
	Note that, if $ |x_d'| \leq \xi(\epsilon,r_d)$ for $d=1,2$, then the following inequality holds:
	\begin{align*}
		&| \prod_d e^{\pi ix_d'} - \prod_d \mathcal{P}_{r_d-1, \xi(\epsilon,r_d), \pi}(x_d') | \\
		&=| (e^{\pi ix_1'} - \mathcal{P}_{r_1-1, \xi(\epsilon,r_1), \pi}(x_1')) \cdot e^{\pi ix_2'}
		+ \mathcal{P}_{r_1-1, \xi(\epsilon,r_1), \pi}(x_1') \cdot (e^{\pi ix_2'} - \mathcal{P}_{r_2-1, \xi(\epsilon,r_2), \pi}(x_2')) | \\
		&\leq | e^{\pi ix_1'} - \mathcal{P}_{r_1-1, \xi(\epsilon,r_1), \pi}(x_1') | \cdot |e^{\pi ix_2'}|
		+ | \mathcal{P}_{r_1-1, \xi(\epsilon,r_1), \pi}(x_1') |  \cdot |e^{\pi ix_2'} - \mathcal{P}_{r_2-1, \xi(\epsilon,r_2), \pi}(x_2')| \\
		&\leq \epsilon \cdot 1 + (\epsilon+1) \cdot \epsilon
	\end{align*}
	since $ |e^{\pi ix_2'}| = 1$ and $ | \mathcal{P}_{r_1-1, \xi(\epsilon,r_1), \pi}(x_1') | \leq | \mathcal{P}_{r_1-1, \xi(\epsilon,r_1), \pi}(x_1') - e^{\pi ix_1'} | + |e^{\pi ix_1'}| \leq \epsilon+1$.
	Therefore, we obtain the desired approximation bound of 2-dimensional \method:
	\[
		\|\hat{\bm{a}}-\mathcal{E}(\hat{\bm{a}})\|_{\mathcal{R}}
		\leq \sum |a^{(k_1,k_2)}_{l_1 l_2}| \cdot (\epsilon^2 + 2\epsilon) 
		=\| \bm{a} \| _{\rm1} \cdot (\epsilon^2 + 2\epsilon) .
	\]
\end{proof}

\end{document}